\newcommand{\R}{\ensuremath{\mathbb{R}}\xspace}
\newcommand{\M}{\ensuremath{\mathcal{M}}\xspace}
\newcommand{\N}{\ensuremath{\widetilde{\mathcal{N}}}\xspace}
\newcommand{\Q}{\ensuremath{\mathcal{Q}}\xspace}
\newcommand{\x}{\ensuremath{\mathbf{x}}\xspace}
\newcommand{\y}{\ensuremath{\mathbf{y}}\xspace}
\newcommand{\X}{\ensuremath{\mathbf{X}}\xspace}
\newcommand{\I}{\ensuremath{\mathbf{I}}\xspace}
\newcommand{\bL}{\ensuremath{\mathbf{L}}\xspace}
\newcommand{\W}{\ensuremath{\mathbf{W}}\xspace}
\newcommand{\bH}{\ensuremath{\mathbf{H}}\xspace}
\newcommand{\Wm}{\ensuremath{\mathbf{U}}\xspace}
\newcommand{\tran}{\ensuremath{\mathit{\tran}}}
\newcommand{\Agg}{\textit{\Agg}}
\newcommand{\tA}{\ensuremath{\Tilde{\mathbf{A}}}\xspace}
\newcommand{\tD}{\ensuremath{\Tilde{\mathbf{D}}}\xspace}
\newcommand{\numNode}{\ensuremath{m\xspace}}
\begin{document}
\newcolumntype{P}[1]{>{\centering\arraybackslash}p{#1}}
\newcolumntype{M}[1]{>{\centering\arraybackslash}m{#1}}
\newcommand{\zhou}[1]{\todo[color=blue!20]{Zhou: #1}}
\title{Alleviating Over-Smoothing via Aggregation over Compact Manifolds}
%
%

\makeatletter
\newcommand{\printfnsymbol}[1]{%
  \textsuperscript{\@fnsymbol{#1}}%
}
\makeatother

\author{Dongzhuoran Zhou\inst{1,2}\thanks{equal contribution}\textsuperscript{\Letter} \and Hui Yang\inst{3}\printfnsymbol{1}  \and Bo Xiong\inst{4} 
\and Yue Ma\inst{3}\and Evgeny Kharlamov\inst{1,2}}

%
\authorrunning{D. Zhou et al.}
%
\institute{Bosch Center for AI, Germany \\
\email{\{dongzhuoran.zhou, evgeny.kharlamov\}@de.bosch.com} \and
University of Oslo, Norway \and 
LISN, CNRS, Universite Paris-Saclay, France \\ \email{\{yang, ma\}@lisn.fr} \and
University of Stuttgart, Germany \\ \email{bo.xiong@ipvs.uni-stuttgart.de}}

\maketitle              
%
\begin{abstract}
Graph neural networks (GNNs) have achieved significant success in various applications. Most GNNs learn the node features with information aggregation of its neighbors and feature transformation in each layer. However, the node features become indistinguishable after many layers, leading to performance deterioration: a significant limitation known as over-smoothing. 
Past work adopted various techniques for addressing this issue, 
such as normalization and skip-connection of layer-wise output. 
After the study, we found that the information aggregations in existing work are all contracted aggregations, with the intrinsic property that features will inevitably converge to the same single point after many layers.
To this end, we propose the aggregation over
compacted manifolds method (ACM) that replaces the existing information aggregation with aggregation over compact manifolds, a special type of manifold, 
which avoids contracted aggregations.
In this work, we theoretically analyze contracted aggregation and its properties. We also provide an extensive empirical evaluation that shows ACM can effectively alleviate over-smoothing and outperforms the state-of-the-art. The code can be found in \href{https://github.com/DongzhuoranZhou/ACM.git}{https://github.com/DongzhuoranZhou/ACM.git}.
\keywords{Graph Neural Network\and Over-smoothing \and Manifold.}
\end{abstract}
%
%
%
\section{Introduction}
\label{sec:intro}

Graph neural networks (GNNs)~\cite{zhou2020graph} are potent tools for analyzing graph-structured data, including biochemical networks~\cite{xu2018powerful}, social networks~\cite{hamilton2017inductive}, and academic networks~\cite{gao2018large}. Most GNNs employ a message-passing mechanism for learning node features, involving information aggregation from neighbors and feature transformation in each layer~\cite{hamilton2020graph}. This mechanism enables  GNNs to effectively capture detailed information in graph data.\looseness-1

\begin{figure}
\centering 
\subfigure[Contracted aggregation]{
\begin{minipage}[t]{0.3\linewidth}
\centering
\includegraphics[width=1.1in]{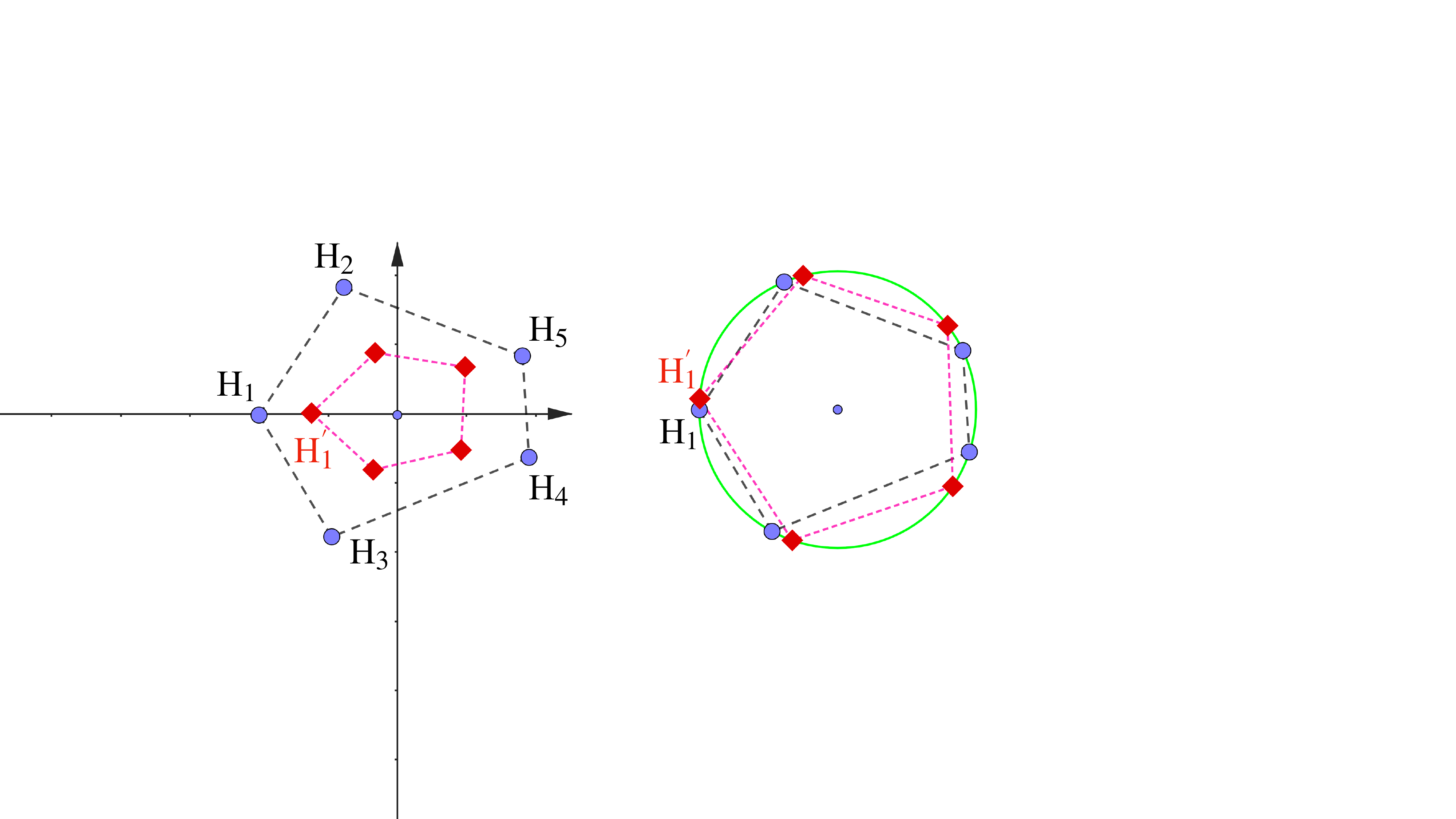}
\end{minipage}%
}%
\subfigure[Aggregation on circle]{
\begin{minipage}[t]{0.3\linewidth}
\centering
\includegraphics[width=1.1in]{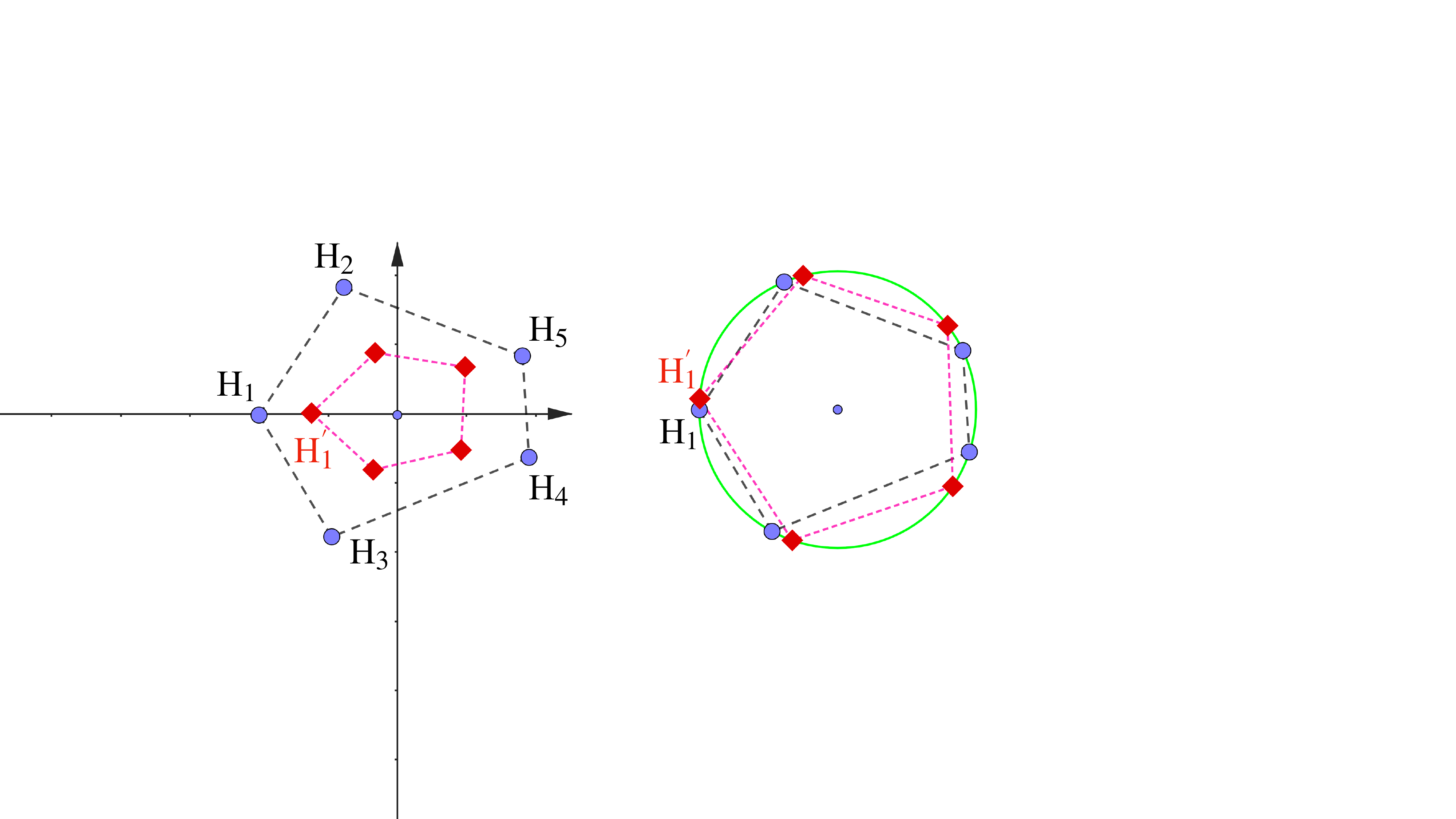}
\end{minipage}%
}%
\subfigure[Compact Manifolds]{
\begin{minipage}[t]{0.3\linewidth}
\centering
\raisebox{0.6\height}{
    \includegraphics[width= 1.1in]{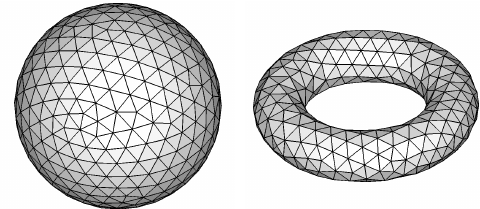}
}
\label{fig:compact}
\end{minipage}%
}%
    \caption{(a) node features with \textit{contracted aggregation}  become closer and inevitably converge after many layers. (b) Our approach avoids contracted aggregation by aggregating on compact manifolds. (c) Compact manifolds: sphere and torus.
    }
    \label{fig:intro}
\end{figure}

Though potential in various tasks,  existing GNNs 
with many layers  cause node representations to be highly  indistinguishable, thus leading 
to significant performance deterioration in downstream tasks - a phenomenon known as over-smoothing~\cite{li2018deeper}. 
Increasing efforts have been devoted to addressing  the over-smoothing issue, such as batch norm~\cite{ioffe2015batch}, pair norm~\cite{zhao2019pairnorm}, group norm~\cite{zhou2020towards}, drop edges or nodes~\cite{rong2019dropedge,chen2020measuring}, and regularize the nodes~\cite{hou2020measuring} or feature distance~\cite{jin2022feature}. Most of  existing studies focus on alleviating the over-smoothing based on enhancing the distinction in node dimension~\cite{ioffe2015batch,zhao2019pairnorm} 
or feature dimension~\cite{jin2022feature} 
 through normalization, adding regularization items~\cite{jin2022feature}, skip-connection~\cite{xu2018representation,li2019deepgcns}, etc.  


In this work, we show that past works on over-smoothing are confined to \textit{contracted aggregation} on Euclidean spaces, where features inevitably converge to a single point after multiple layers of information aggregations. The intuition of \textit{contracted aggregation} is illustrated in Fig.\ref{fig:intro}a, using the plane $\R^2$ as the embedding space and the mean of vectors as the aggregation function. The aggregation results (red points) become closer than the initial embedding (blue points), causing nodes to converge to a single point after several steps. In contrast, Fig.\ref{fig:intro}b shows that after \emph{aggregation over compact manifold} (compact manifold see Fig.\ref{fig:compact}), node features do not become closer, thus avoiding the \textit{contracted aggregation} issue\footnote{In Fig.~\ref{fig:intro}b, the unit circle is used as the embedding space. The aggregation of points is defined under \textit{polar coordinates}. For example, the aggregation of two points in the unit circle with polar coordinate  $(1, \theta), (1, \phi)$ is $(1, \frac{\theta+\phi}{2})$.}. Even after numerous information aggregation steps, the aggregation results (red points) remain distinguishable and do not converge to a single point, unlike the Euclidean case.\looseness=-1

In light of this, we perform an in-depth study on the over-smoothing in GNNs from the information aggregation perspective. We aim to bring the benefits of information aggregation on manifolds to alleviate over-smoothing. 
We develop a general theory to explain the reason for over-smoothing in Riemannian manifolds (including Euclidean spaces). Based on this theory, we propose \textit{aggregation over compact manifolds} (ACM) to  address the
over-smoothing issue.

Our contributions can be summarized as follows: 

\begin{itemize}[topsep=3pt,parsep=0pt,partopsep=0pt,itemsep=0pt,leftmargin=*]
    \item We propose the notion of \textit{contracted aggregation}  (Section~\ref{sec:method}) and prove that node features will converge to the same single point after many layers of contracted aggregation, leading to over-smoothing.
    We also prove that standard GNNs, e.g., SGC, GCN, GAT, are either \textit{contracted aggregation} or mathematically equal.
    Our claim holds for embedding space of any Riemannian manifolds, including Euclidean Space.
    \item We propose \textit{aggregation over compact manifold} (ACM) (Section~\ref{sec:acm}) that can be integrated into each layer of  GNNs models to prevent contracted aggregation, effectively mitigating over-smoothing across various GNN architectures. 
    \item We provide extensive empirical evidence (Section~\ref{sec:experiments}) that shows  the proposed ACM can alleviate over-smoothing and improve deeper GNNs with better performance compared to the state-of-the-art (SotA) methods. 
    The improvement becomes more significant in a more complex experiment setting (i.e., missing feature) that requires massive layers for GNNs to achieve a good performance.
\end{itemize}

\section{Related Work}
\label{sec:relwork}
\textbf{Over-smoothing in GNNs}   Recent studies have shown that deep stacking of GNN layers can result in a significant performance deterioration, commonly attributed to the problem of over-smoothing~\cite{chen2020measuring,zhao2019pairnorm}.  The over-smoothing issue was initially highlighted in \cite{li2018deeper}, where the authors show that node embeddings will converge to a single point or lose structural information after an infinite number of random walk iterations. Their result has been extended to more general cases considering transformation layers and different activate functions by \cite{DBLP:journals/corr/abs-2008-09864,DBLP:conf/iclr/OonoS20,cai2020note}. 

To address
the over-smoothing issue and enable deeper GNNs, various methods
have been proposed~\cite{hamilton2017inductive,li2018deeper,li2019deepgcns,rong2019dropedge}. 
One approach is to employ skip connections for multi-hop message passing, such as 
GraphSAGE~\cite{hamilton2017inductive} 
JKNet \cite{li2018deeper}.
There are also several approaches developed based on the existing methods in other areas. For instance,
 the study of \cite{li2019deepgcns} leveraged concepts from ResNet~\cite{he2016deep} to incorporate both residual and dense connections in the training of deep GCNs; GCNII~\cite{chen2020simple} utilizes initial residual and identity mapping.
\cite{rong2019dropedge} proposed Dropedge that alleviates over-smoothing through a reduction in message passing by  removing edges inspired by the use of Dropout~\cite{DBLP:journals/jmlr/SrivastavaHKSS14}. 
Another mainstream approach is to normalize output features of each layer, such as BatchNorm,  PairNorm~\cite{zhao2019pairnorm} and DGN  \cite{zhou2020towards}.
DeCorr~\cite{jin2022feature} introduced over-correlation as one reason for significant performance deterioration.  
APPNP~\cite{klicpera2018predict} use Personalized PageRank and GPRGNN~\cite{chien2020adaptive} use Generalized PageRank to mitigate over-smoothing. 
DAGNN~\cite{liu2020towards} was introduced to create deep GNNs by decoupling graph convolutions and feature transformation.
However, these works keep using the original information aggregation function and focus only on alleviating distinguishable after feature transformation.


\noindent\textbf{GNNs on manifolds}  
Recent attention in GNN research has been directed towards neural networks on \emph{manifolds}, impacting various domains like knowledge graph embedding, computer vision, and natural language processing~\cite{DBLP:conf/nips/BalazevicAH19,DBLP:conf/iclr/GulcehreDMRPHBB19,DBLP:conf/cvpr/KhrulkovMUOL20}. Many works focus on the {\emph{hyperbolic space}}, a Riemannian manifold with constant negative curvature, with \cite{chami2019hyperbolic} introducing Hyperbolic Graph Neural Networks (HGNN). \cite{bachmann2020constant} proposes a general GNN on manifolds applicable to hyperbolic space and hyperspheres. This work concentrates on GNN over compact manifolds, encompassing hyperspheres but not hyperbolic spaces.
We introduce a novel GNN framework over compact manifolds. Unlike existing methods, our model doesn't depend on special non-linear functions, such as the \emph{exponential} and \emph{logarithmic} maps~\cite{DBLP:conf/nips/GaneaBH18} defined in hyperbolic space and hyperspheres, making it computationally more straightforward. Additionally, our model is more general, defined on more general kinds of manifold spaces, including hyperspheres.

\section{Preliminaries}
\label{sec:prelim}

\textbf{Compact Metric Space} We start with a brief introduction to basic topology (more details can be found in literature \cite{mendelson1990introduction}). A \emph{metric space} $\mathcal{S}$ is a set equipped with a distance function ${d_\mathcal{S}: \mathcal{S}\times\mathcal{S}\rightarrow \R_{\geq 0}}$ (i.e., a function that is positive, symmetric, and satisfying the triangle inequality). A sequence $p_1,p_2,\ldots\in \mathcal{S}$ \emph{converge} if  
$\lim_{k\rightarrow \infty} \max\{d_\mathcal{S}(p_i, p_j)\mid i, j\geq k\} = 0.$ 
Metric space $\mathcal{S}$ is \emph{closed} if for every convergent sequence $p_1,p_2,\ldots\in \mathcal{S}$ there exits $p\in\mathcal{S}$ such that $\lim_{i\rightarrow\infty} d_\mathcal{S}(p_i, p) = 0$. Metric space $\mathcal{S}$ is  \emph{compact} if 
it is closed and \emph{bounded}---that is, there exists a $r \in \mathbb R$ such that $d_\mathcal{S}(p, q)<r$ for every $p, q\in\mathcal{S}$. 

For example, the open interval $\mathcal{S}_{\sf oint} =(0, 1)\subseteq \R$ is a metric space associated with distance $d_{\mathcal{S}_{\sf oint}}(x, y) = |x-y|$. Then $2^{-1}, 2^{-2}, 2^{-3}, \ldots \in \mathcal{S}_{\sf oint}$ is a convergent sequence, but there is no point in $\mathcal{S}_{\sf oint}$ that is the limit of this sequence. Therefore, $\mathcal{S}_{\sf oint}$ is not closed. In contrast, a unit circle $\mathcal{S}_{\sf circ}$ in $\mathbb R^2$ with $d_{\mathcal{S}_{\sf circ}}(x, y) = ||x-y||$ is closed, since $d_{\mathcal{S}_{\sf circ}}(p, q) < 3$ for all $p, q\in \mathcal{S}_{\sf circ}$, and compact.

\noindent
\textbf{Compact Manifolds} Next we provide a brief introduction to manifolds with the notions necessary for this paper (see further details in literature \cite{Lee_2013}). 
A $d$-dimensional \emph{manifold} $\mathcal{M}$ is a hyper-surface in the Euclidean space  $\mathbb{R}^n$ with $n\geq d$ such that each point has an (open) neighbourhood that is homeomorphic to an open subset of $\mathbb{R}^d$ (i.e., locally looks like $\mathbb{R}^d$).
A \emph{Riemannian manifold} $\mathcal{M}$ is a manifold along with a Riemannian metric, from which one can derive a distance function $d_\M(\textbf{x}, \textbf{y})$ for points $\x, \y \in \mathcal{M}$ thus making $\M$ a metric space (so, we can talk about closed and compact Riemannian manifolds).  
A \textit{compact manifold} is a Riemannian mani\-fold that is also being a compact metric space. Some examples of compact  mani\-folds in $\R^3$ are shown in Fig.~\ref{fig:compact}. 
A differentiable map $f:\mathcal{M}\rightarrow \mathcal{M}$
is \emph{diffeomorphism} if it is bijective and its inverse $f^{-1}$ is a differentiable map.

\noindent
\textbf{Graph Neural Networks} A (undirected) graph  $G=(V, E)$ is a pair of a nodes set $V$ and a edges set $E$. Let $\mathcal{N}(u) = \{v \mid \{u,v\}\in E \}$ and  $\N(u) = \mathcal{N}(u)\cup \{u\}$, ${\bf D} = \mathit{diag}(d_1, \ldots, d_m)$ with $d_{i} = \sum_{j=1}^m {\bf A}_{i,j}$,  where ${\bf A}_{i,j}$ denotes the elements of the adjacency matrix ${\bf A}$, as well as the augmented matrices $\widetilde{\bf A}={\bf A}+ {\bf I}$ and $\widetilde{\bf D}={\bf D} + {\bf I}$, for ${\bf I} = \mathit{diag}(1, \ldots, 1)$. 
We denote the collection of all (undirected) graphs as $\mathcal{G}$.

Given a graph $G$ and an $n$-dimensional node embedding ${\bf H}^{(0)}$ of $G$,  A Graph Neural Network (GNN)  $\mathcal A$ updates the embedding for $\ell$ layers as follows: 
\begin{align}\label{agg_trans}
    \bH^{(k)} &= \mathit{Tran}^{(k)}\left( \mathit{Agg}(\bH^{(k-1)})\right);
\end{align}

\noindent The layer of vanilla-GCN \cite{kipf2016semi} is defined by $\mathit{Agg}(\bH) =\tD^{ -\frac{1}{2}}\tA\tD^{-\frac{1}{2}}\cdot \bH, \mathit{Tran}^{(k)}(\bH) = \sigma(\bH\cdot \W^{(k)})$, where $\sigma$ is activation function and $\W^{(k)}\in \R^{n\times n}$. GNN $\mathcal A$ is over manifold $\mathcal M$ in $\mathbb R^n$ if its aggregation and transformation functions are over $\mathcal M$. 

In this work, we consider arbitrary aggregation functions that are universally defined over the embedding of all neighborhoods $\N(u_i)$ in any graph $G\in\mathcal {G}$. Therefore, all aggregate-transform GNNs with specific aggregations are included, such as GCN~\cite{kipf2016semi} and GAT~\cite{velivckovic2017graph}, as well as SGC~\cite{wu2019simplifying}.

\section{Contracted Aggregation Problem}
\label{sec:method}

 This section is organized as follows.
 First, in Section \ref{sec:contract}, we identify a special property, called \emph{contracted}, of aggregation functions. We show that aggregation functions in GCNs, GATs, and SGCs are contracted or mathematically equivalent. Then, in Section \ref{sec:overSmoothing}, we demonstrate that contracted aggregations cause over-smoothing  by means of Theorem \ref{main_theo}. 
Inspired by this, in Section \ref{sec:construct-non}, we develop a general approach for constructing non-contracted aggregations for alleviating over-smoothing using aggregation over compact Riemannian manifolds (Theorem \ref{theo:agg}). Finally, in Section \ref{sec:aggOnCompact}, we motivate and introduce our aggregation function defined over a specific kinds of compact Riemannian manifolds denoted by $\M_\Wm$.

Next, we let $\M\subset \R^n$ be a $d$-dimensional closed Riemannian manifold, and let $Agg$ be an aggregation function over $\M$. For each graph $G\in\mathcal{G}$ with nodes $\{u_1, \ldots, u_m\}$, 
we denote by
$Agg^G:  \M^m \rightarrow \M^m$
the  \emph{restriction of $Agg$ on $G$} defined by restricting  $Agg$ over embeddings of $G$ on $\M$ (i.e., $\bH\in\M^m$). 

\subsection{Contracted aggregation}\label{sec:contract}
The notion of contracted aggregation, which we introduce in this section, generalizes the usual mean (or avg) aggregation by extracting two center properties of the averaging function over Euclidean space $\R^n$: for all $\x, \x_1, \ldots, \x_h\in \R^n$ we have
\textbf{(i)} $\mathit{Mean}(\x_1, \ldots, \x_h) = \x$ if $\x_1 = \cdots = \x_h = \x$; and
\textbf{(ii)} $d_{\R^n}(\x, \mathit{Mean}(\x_1, \ldots, \x_h)) \leq \max(d_{\R^n}(\x, \x_1), \ldots, d_{\R^n}(\x, \x_h))$.
Fig.~\ref{fig:mean} illustrates the second property. 

\begin{definition}[Contracted aggregation]\label{def:contracted}
For a graph $G$ with nodes $\{u_1, \ldots, u_m\}$, 
$Agg^G$ is \emph{contracted} if for any $1\leq i\leq m$, and for any embedding $\bH\in \M^m$ of $G$ with $\overline{\bH} = \mathit{Agg}^G(\bH)$, the following results hold:
\begin{enumerate}
    \item If $\bH_{j,:} = \x, \forall u_j\in \N(u_i)$, then $\overline{\bH}_{i, :} = \x.$    
    \item For any $ \x\in\M$, we have 
    $d_\M\left(\x,\ \overline{\bH}_{i, :}\right)\leq \max_{u_j\in \N(u_i)}\{d_\M\left(\x, \bH_{j, :}\right)\}$ 
    and the equality holds if and only if Case 1 happens.
\end{enumerate}

\noindent Moreover, we say $\mathit{Agg}^G$  is \emph{equivalently contracted} if there exists a diffeomorphism $g:\M^m\rightarrow \M^m$ s.t.   $g^{-1}\circ\mathit{Agg}^G\circ g$ is contracted, where $\circ$ means the map composition, i.e., $f\circ g(x) = f(g(x))$. 
Finally, an aggregation $\mathit{Agg}$ is \emph{(equivalently) contracted} if  $Agg^G$  is  (equivalently) contracted for any graph $G\in\mathcal{G}$.
\end{definition}

\begin{figure}
  \centering
  \begin{minipage}{0.5\textwidth}
    \centering
   \includegraphics[width = 4.5cm]{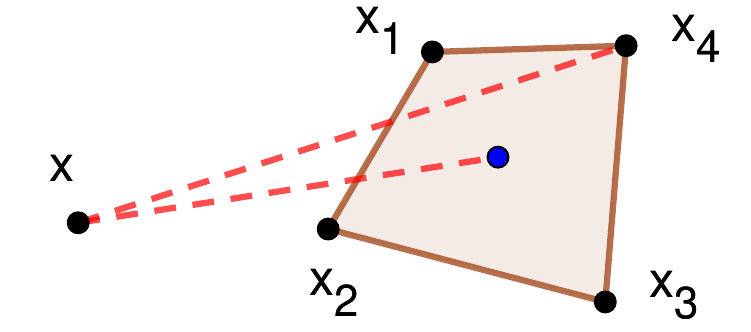}
    \caption{Distance between $\x$ and the mean point (blue) is smaller than $d_{\R^2}(\x, \x_4)$}
    \label{fig:mean}
  \end{minipage}%
  \hfill
  \begin{minipage}{0.45\textwidth}
    \centering
\begin{displaymath}
    \xymatrix@R=5ex@C=13ex{ 
        \M^m \ar[r]^{\mathit{Agg}^G} & \M^m \ar[d]^{g^{-1}} \\
        \M^m \ar[u]^{g}\ar[r]^{g^{-1}\circ\mathit{Agg}^G\circ g} & \M^m 
    }
\end{displaymath}
    \caption{Equivalently contracted aggregation}
    \label{fig:eq_agg}
  \end{minipage}
\end{figure}

\noindent  Intuitively, $\mathit{Agg}^G$ is equivalently contracted if $\mathit{Agg}^G$ is contracted after changing the ``reference frame" by the equivalent transformation $g$ on $\M^m$, which is the space of embedding matrix of $G$, as shown in Fig.\ref{fig:eq_agg}. 
By the following result, we show that the standard aggregations used in SGC, GCN, and GAT are all contracted or equivalently contracted.
\begin{proposition}\label{prop:aggLi}
Let $\M = \R^n$ ($n>0$), assume $\mathit{Agg}$ is a aggregation function on  $\M$ s.t. the restriction of $\mathit{Agg}$ on a graph $G\in\mathcal{G}$ with node $\{u_1, \ldots, u_n\}$ is defined by $\mathit{Agg}^G(\bH) = \bL\cdot\bH,\ \  \forall~ \bH\in \R^{m\times n}$, 
where $\bL\in \R^{m\times m}$. Then we have the following results:
\begin{enumerate}
    \item If $\bL = (1-\lambda)\I+\lambda \tD^{-1}\tA$ for some $\lambda\in(0, 1]$, then $Agg$ is contracted;
    \item If $\bL = (1-\lambda)\I+\lambda \tD^{ -\frac{1}{2}}\tA\tD^{-\frac{1}{2}}$ for some $\lambda\in(0, 1]$, then $Agg$ is equivalently contracted;
    \item If $\bL = \mathit{Att}(\bH)$ is defined by a function $\mathit{Att}:\R^{m\times n}\rightarrow \R^{n\times n}$  s.t. $\bL_{i,j}>0$ and $\sum_{k} \bL_{i,k} = 1$ for any $\bH\in\M^m$ and $1\leq i, j\leq m$, then $Agg$ is contracted.
\end{enumerate}
\end{proposition}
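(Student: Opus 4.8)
The plan is to dispatch the three items in turn, putting essentially all of the work into item~(1) and then reducing items~(2) and~(3) to it. Fix a graph $G\in\mathcal G$ with nodes $\{u_1,\dots,u_m\}$, write $\overline{\bH}=\mathit{Agg}^G(\bH)=\bL\bH$, so that $\overline{\bH}_{i,:}=\sum_j\bL_{i,j}\bH_{j,:}$, and note that much of the argument will turn on $\bL$ being \emph{row-stochastic} and \emph{supported on the neighbourhoods} (i.e.\ $\bL_{i,j}=0$ whenever $u_j\notin\N(u_i)$).

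For item~(1) I would first record that $\tD^{-1}\tA$ is row-stochastic, with $(\tD^{-1}\tA)_{i,j}=1/|\N(u_i)|$ when $u_j\in\N(u_i)$ and $0$ otherwise, since $\tA_{i,j}=1$ exactly when $u_j\in\N(u_i)$ and $\tilde d_i=|\N(u_i)|$. Hence, for $\lambda\in(0,1]$, the matrix $\bL=(1-\lambda)\I+\lambda\tD^{-1}\tA$ is still row-stochastic, vanishes off $\N(u_i)$, and is strictly positive on $\N(u_i)$: $\lambda>0$ makes the off-diagonal neighbour entries positive, and $\lambda\le 1$ keeps $\bL_{i,i}=(1-\lambda)+\lambda/|\N(u_i)|>0$. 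Condition~1 of Definition~\ref{def:contracted} is then immediate because $\sum_{u_j\in\N(u_i)}\bL_{i,j}=1$; condition~2 follows by writing $\x=\sum_j\bL_{i,j}\x$, applying the triangle inequality with the nonnegative weights $\bL_{i,j}$, and bounding the resulting convex combination of the distances $\|\x-\bH_{j,:}\|$ (over $u_j\in\N(u_i)$) by their maximum.

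The delicate part is the equality clause of condition~2: I would argue that equality (for a given $\x$) forces equality both in the triangle inequality and in ``convex combination $\le$ maximum''. Since all weights on $\N(u_i)$ are strictly positive, the latter forces $\|\x-\bH_{j,:}\|$ to be a common value $r$ for every neighbour; the former forces the vectors $\x-\bH_{j,:}$ to be pairwise nonnegatively proportional, and then equal norms make them all equal---so the neighbour rows all coincide, which is precisely Case~1 of the definition. The converse (Case~1 $\Rightarrow$ equality) is a one-line check. Item~(3) then needs no new work: $\bL=\mathit{Att}(\bH)$ is again row-stochastic, supported on $\N(u_i)$ and strictly positive there, and the argument for item~(1) used nothing beyond these properties, so it applies verbatim (that the entries now depend on $\bH$ is harmless).

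For item~(2) the idea is that $\bL=(1-\lambda)\I+\lambda\tD^{-1/2}\tA\tD^{-1/2}$ is conjugate to the row-stochastic matrix $\bL'=(1-\lambda)\I+\lambda\tD^{-1}\tA$ of item~(1) via $\bL=\tD^{1/2}\bL'\tD^{-1/2}$. I would realise this conjugation geometrically by the diffeomorphism $g:\M^m\to\M^m$, $g(\bH)=\tD^{1/2}\bH$ (a linear automorphism of $\M^m\cong\R^{m\times n}$, as $\M=\R^n$, with inverse $\bH\mapsto\tD^{-1/2}\bH$), and then check that $g^{-1}\circ\mathit{Agg}^G\circ g$ equals the aggregation $\bH\mapsto\bL'\bH$, which is contracted by item~(1); since this holds for every $G$, $\mathit{Agg}$ is equivalently contracted. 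I expect the two places that require care to be the equality characterisation in condition~2 (intersecting the equality cases of the triangle inequality and of the weighted-average bound) and spotting the diagonal conjugating map $g$ in item~(2)---once $g$ is written down, item~(2) collapses to a one-line reduction.
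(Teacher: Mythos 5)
Your proposal is correct and follows essentially the same route as the paper: cases (1) and (3) via row-stochasticity of $\bL$ plus the weighted-mean inequality $d_{\R^n}(\x,\sum_j w_j\x_j)\leq\max_j d_{\R^n}(\x,\x_j)$ with equality iff all points coincide, and case (2) by conjugating with exactly the same diffeomorphism $g(\bH)=\tD^{1/2}\bH$ to reduce to case (1). Your spelled-out equality analysis (combining the equality cases of the triangle inequality and of the convex-combination bound) is more detailed than the paper's one-line assertion, but it is the same argument.
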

\subsection{Over-smoothing due to contracted aggregations}\label{sec:overSmoothing}
Now, we show that contracted aggregations lead to over-smoothing by Theorem~\ref{main_theo}. This generalizes Theorem 1 of \cite{li2018deeper} to arbitrary equivalently contracted aggregations over Riemannian manifolds,  from aggregations defined by \emph{Laplace smoothing} (aggregations of Case 1 and 2 in Proposition \ref{prop:aggLi}) over Euclidean spaces. 
\begin{theorem}\label{main_theo}
 
Let $\{\bH^{(k)}\}_{k\geq 0}$ be an infinite sequence of embeddings of $G$ over $\mathcal{M}$ s.t. 
$\bH^{(k+1)} = Agg^G\left(\bH^{(k)}\right),\  \text{ for all } k\geq 0.$ 
Assume $G$ is connected. If $\mathit{Agg}^G$ is equivalently contracted wrt a diffeomorphism $g:\M^m\rightarrow \M^m$ (i.e., $g^{-1}\circ\mathit{Agg}^G\circ g$ is contracted), then there exist $\X_0 {=} (\x_0, \ldots, \x_0)\in\M^m$ s.t. $\lim_{k\rightarrow\infty}\bH^{(k)} = g(\X_0)$.
\end{theorem}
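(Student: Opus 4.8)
The plan is to reduce to the contracted case via the diffeomorphism $g$ and then run a monotone-convergence argument on a suitable real-valued potential. First I would set $\widetilde{\bH}^{(k)} = g^{-1}(\bH^{(k)})$, so that $\widetilde{\bH}^{(k+1)} = (g^{-1}\circ\mathit{Agg}^G\circ g)(\widetilde{\bH}^{(k)})$ and the map $F := g^{-1}\circ\mathit{Agg}^G\circ g$ is contracted in the sense of Definition~\ref{def:contracted}. It then suffices to prove that $\widetilde{\bH}^{(k)}$ converges to some $\X_0 = (\x_0,\ldots,\x_0)$, since applying the (continuous) map $g$ gives $\bH^{(k)}\to g(\X_0)$. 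So for the rest of the argument I work with $F$ and write $\widetilde{\bH}^{(k)}$ simply as $\bH^{(k)}$.

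The key quantity is the diameter of the point cloud: define $\rho_k := \max_{i,j} d_\M\bigl(\bH^{(k)}_{i,:},\, \bH^{(k)}_{j,:}\bigr)$. The first step is to show $\rho_k$ is non-increasing. Fix $k$ and any node $u_i$; for an arbitrary fixed $\x\in\M$, property~2 of contractedness gives $d_\M(\x,\, \bH^{(k+1)}_{i,:}) \le \max_{u_j\in\N(u_i)} d_\M(\x,\, \bH^{(k)}_{j,:}) \le \rho_k$ whenever $\x$ itself is one of the points $\bH^{(k)}_{\ell,:}$. Taking $\x = \bH^{(k+1)}_{i',:}$ is not directly allowed (it is a level-$(k{+}1)$ point), so instead I bound $d_\M(\bH^{(k+1)}_{i,:},\bH^{(k+1)}_{i',:})$ by first applying property~2 with $\x = \bH^{(k+1)}_{i',:}$ held fixed at node $u_i$, obtaining $\le \max_{u_j\in\N(u_i)} d_\M(\bH^{(k+1)}_{i',:}, \bH^{(k)}_{j,:})$, then applying property~2 again at node $u_{i'}$ with the now-fixed $\x = \bH^{(k)}_{j,:}$, obtaining $\le \max_{u_{j'}\in\N(u_{i'})} d_\M(\bH^{(k)}_{j,:}, \bH^{(k)}_{j',:}) \le \rho_k$. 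Hence $\rho_{k+1}\le\rho_k$, so $\rho_k\downarrow \rho_\infty$ for some $\rho_\infty\ge 0$.

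Next I must upgrade "non-increasing" to "$\rho_\infty = 0$ together with genuine convergence of the sequence". For this I use compactness of $\M$: the sequence $(\bH^{(k)})_k$ lives in the compact space $\M^m$, so it has a convergent subsequence $\bH^{(k_t)}\to \Y^* = (\y^*_1,\ldots,\y^*_m)$, and by continuity of the diameter function, the diameter of $\Y^*$ equals $\rho_\infty$. Suppose toward a contradiction $\rho_\infty > 0$. Using connectedness of $G$: if not all $\y^*_i$ are equal, there is an edge $\{u_a,u_b\}$ with $\y^*_a\neq\y^*_b$ (walk along a path between two points realizing the diameter). Now apply $F$ once more to $\Y^*$: by continuity $F(\bH^{(k_t)})=\bH^{(k_t+1)}\to F(\Y^*)$, and again passing to a further subsequence $\bH^{(k_t+1)}$ converges; but then the diameter of $F(\Y^*)$ is also $\rho_\infty$ (it is a limit of $\rho_{k_t+1}\ge\rho_\infty$ and $\le\rho_\infty$). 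On the other hand, I claim the diameter must \emph{strictly} decrease: pick $i,j$ realizing the diameter of $F(\Y^*)$; by the double application of property~2 as above, $d_\M(F(\Y^*)_{i,:}, F(\Y^*)_{j,:}) \le \max$ over a two-step neighborhood of distances among the $\y^*_\ell$, and the equality case of property~2 forces all the relevant $\y^*_\ell$ in that neighborhood to coincide — propagating this equality through neighborhoods and using connectedness of $G$ forces all $\y^*_\ell$ equal, contradicting $\rho_\infty>0$. Hence $\rho_\infty = 0$, i.e. every subsequential limit is of the form $(\x_0,\ldots,\x_0)$.

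Finally, to get convergence of the full sequence (not just subsequences) and identify the limit: once $\rho_k\to 0$, I show the common "center" stabilizes. Here I would track a single coordinate, say $\mathbf{c}_k := \bH^{(k)}_{1,:}$, and show $d_\M(\mathbf{c}_{k+1},\mathbf{c}_k)\le\rho_k$ directly from property~2 (taking $\x=\mathbf{c}_k$ fixed, applied at node $u_1$); since $\sum_k\rho_k$ need not converge, instead I argue that all subsequential limits agree: any two subsequential limits $(\x_0,\ldots,\x_0)$ and $(\x_0',\ldots,\x_0')$ satisfy $d_\M(\x_0,\x_0')\le \liminf \rho_k \cdot(\text{const})$... more cleanly, $\sup_i d_\M(\bH^{(k)}_{i,:}, \x_0)\to 0$ along one subsequence and the diameter $\to 0$, and a standard argument (the set of subsequential limits of a sequence in a compact metric space is connected, or: use that $d_\M(\bH^{(k)}_{1,:}, \bH^{(k')}_{1,:}) \le$ diameter bounds propagate once the point cloud is $\epsilon$-tight for all large indices) shows the limit is unique. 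Then $\bH^{(k)}\to (\x_0,\ldots,\x_0) =: \X_0$, and undoing the reduction, $\bH^{(k)}_{\text{original}} = g(\widetilde{\bH}^{(k)})\to g(\X_0)$.

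The main obstacle I anticipate is the step forcing \emph{strict} contraction of the diameter at a non-constant fixed configuration: property~2's equality case is stated coordinatewise ("the equality holds iff Case 1 happens" at node $u_i$), so I need to carefully chain the equality conditions across the graph and invoke connectedness to conclude global constancy — the two-step neighborhood bookkeeping (because aggregation mixes $\N(u_i)$, and the diameter-realizing pair may not be adjacent) is where the argument is most delicate, and it is also where I must be sure that $g^{-1}\circ\mathit{Agg}^G\circ g$ inherits contractedness with the \emph{same} underlying graph/neighborhood structure, which it does by definition.
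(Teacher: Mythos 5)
Your overall strategy matches the paper's (reduce to the contracted case via $g$, run a monotone potential, extract subsequential limits by compactness, use the equality case of contractedness plus connectivity, then identify the limit), but you replace the paper's potential $f_{\x}(\bH)=\max_i d_\M(\x,\bH_{i,:})$ for a \emph{fixed external reference point} $\x$ by the diameter of the point cloud, and this substitution creates two genuine gaps. First, your claim that the diameter of $F(\Y^*)$ must \emph{strictly} decrease when $\Y^*$ is non-constant is false after a single application of the aggregation: take a path $u_1-\cdots-u_7$ with values $\mathbf{a},\mathbf{a},\mathbf{a},\mathbf{z},\mathbf{b},\mathbf{b},\mathbf{b}$ where $d_\M(\mathbf{a},\mathbf{b})$ is the diameter. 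The closed neighborhoods of $u_1,u_2$ and of $u_6,u_7$ are constant, so these nodes keep their values and the diameter is preserved, yet the configuration is non-constant. The equality case of Condition~2 only constrains $\N(u_i)$ and $\N(u_j)$ for the diameter-realizing pair; there is no mechanism to ``propagate through neighborhoods'' at a single time step. The fix is exactly what the paper does: iterate the aggregation (on the order of the graph diameter, the paper uses $m+1$ steps) and propagate the strict inequality $d(\x,\cdot)<F(\x)$ \emph{along a path and through time}, using that $u_s\in\N(u_{s+1})$ so the already-improved node contaminates its neighbor at the next step.

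Second, even granting $\rho_\infty=0$, your argument does not establish convergence of the full sequence: a collapsing point cloud can still wander (the per-step drift bound $d_\M(\mathbf{c}_{k+1},\mathbf{c}_k)\le\rho_k$ is useless without summability, as you note), and ``the set of subsequential limits is connected'' does not yield uniqueness. The paper closes this with the fixed-reference potential: $F(\x)=\lim_k f_{\x}(\bH^{(k)})$ exists by monotonicity for \emph{every} $\x\in\M$, and any constant subsequential limit $(\x_0,\ldots,\x_0)$ satisfies $d_\M(\x,\x_0)=F(\x)$ for all $\x$; two such limits are then equidistant from every point, hence equal (take $\x=\x_0$). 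A related issue: you assert $\M^m$ is compact, but the standing hypothesis is only that $\M$ is a \emph{closed} Riemannian manifold (the theorem is meant to cover $\R^n$); the diameter alone does not keep the orbit bounded there, whereas $f_{\x}$ confines the orbit to a closed bounded (hence compact) subset of $\M\subset\R^n$. I would recommend keeping your reduction via $g$ and the compactness extraction, but switching the potential to $f_\x$ throughout.
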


\noindent Note that Theorem \ref{main_theo} (proof in Appendix~\ref{app:main_theo})  can be easily extended to non-connected graphs by considering their connected components. Theorem \ref{main_theo} shows that by repeatedly applying an equivalently contracted aggregation, the embedding points of nodes in the given graph  converge to the same point (modulo the impact of the diffeomorphism $g$). That is, equivalently contracted aggregation  leads to over-smoothing.

\subsection{Constructing non-contracted aggregations}\label{sec:construct-non}
Next, we call an aggregation \emph{non-contracted} if it is neither contracted nor equivalently contracted. 
Inspired by Theorem \ref{main_theo} and the example in Figure \ref{fig:intro}, we propose to alleviate the over-smoothing problem by constructing non-contracted aggregations on manifolds.
Indeed, by the following result, we show that any aggregation over a compact manifold is non-contracted. This result provides us with a general approach to constructing non-contracted aggregations. 
\begin{theorem}\label{theo:agg}
Asuume $\mathcal{M}\subset R^n$ is a compact Riemannian manifold  with dimension $d>0$. If a graph $G$ contains a node $u_0$ s.t. $|\N(u_0)|> 2$, 
then for any aggregation $\mathit{Agg}$ over $\mathcal{M}$, the restricted aggregation $\mathit{Agg}^G$ is non-contracted.
\end{theorem}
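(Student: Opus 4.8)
A contracted aggregation is in particular equivalently contracted (take $g=\mathrm{id}$ in Definition~\ref{def:contracted}), so to conclude that $\mathit{Agg}^G$ is non-contracted it suffices to show it is \emph{not} equivalently contracted. Assume, for contradiction, a diffeomorphism $g:\M^m\to\M^m$ with $F:=g^{-1}\circ\mathit{Agg}^G\circ g$ contracted; let $i_0$ be the index of $u_0$ and $s:=|\N(u_0)|\ge 3$. Since $F$ is contracted, Definition~\ref{def:contracted} at the index $i_0$ gives: for every $\bH\in\M^m$, with $\mathbf{p}:=F(\bH)_{i_0,:}$ and with $(\mathbf{q}_1,\dots,\mathbf{q}_s)\in\M^s$ the values that $\bH$ assigns to the nodes of $\N(u_0)$, one has $d_\M(\mathbf{x},\mathbf{p})\le\max_{j}d_\M(\mathbf{x},\mathbf{q}_j)$ for all $\mathbf{x}\in\M$, the inequality being \emph{strict for every $\mathbf{x}$ whenever the $\mathbf{q}_j$ are not all equal} (Case~1 fails). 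As the entries of $\bH$ outside $\N(u_0)$ are free, every tuple $(\mathbf{q}_1,\dots,\mathbf{q}_s)\in\M^s$ arises this way, so it is enough to produce one tuple, not all equal, admitting \emph{no} $\mathbf{p}\in\M$ with $d_\M(\mathbf{x},\mathbf{p})<\max_j d_\M(\mathbf{x},\mathbf{q}_j)$ for all $\mathbf{x}$; equivalently $\bigcap_{\mathbf{x}\in\M}B\!\left(\mathbf{x},\max_j d_\M(\mathbf{x},\mathbf{q}_j)\right)=\emptyset$, with $B$ an open metric ball of $\M$.

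\noindent The tuple will come from a ``geodesic rhombus'' whose existence I deduce from compactness and $\dim\M=d>0$. We may assume $\M$ connected (a disconnected Riemannian manifold is unbounded, hence not compact). By Hopf--Rinow $\M$ is complete, and compactness yields a diameter pair $\mathbf{m}_1,\mathbf{m}_2$ with $L:=d_\M(\mathbf{m}_1,\mathbf{m}_2)=\mathrm{diam}(\M)>0$. Since $\mathbf{m}_2$ realizes the maximum of $d_\M(\mathbf{m}_1,\cdot)$ on the boundaryless manifold $\M$, it cannot be joined to $\mathbf{m}_1$ by a \emph{unique} minimizing geodesic: in that case $d_\M(\mathbf{m}_1,\cdot)$ would be differentiable at $\mathbf{m}_2$ with unit gradient $\dot\gamma(L)\ne 0$, impossible at an interior maximum. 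Pick distinct minimizing geodesics $\gamma_1\ne\gamma_2$ from $\mathbf{m}_1$ to $\mathbf{m}_2$. They have distinct midpoints: if $\gamma_1(L/2)=\gamma_2(L/2)=\mathbf{a}$, then $\gamma_1|_{[0,L/2]}$ and $\gamma_2|_{[0,L/2]}$ are two minimizing geodesics from $\mathbf{m}_1$ to $\mathbf{a}$; but $\gamma_1$ minimizes up to $\mathbf{m}_2=\gamma_1(L)$, so the cut time of $\mathbf{m}_1$ in the direction $\dot\gamma_1(0)$ is $\ge L>L/2$, making $\gamma_1|_{[0,L/2]}$ the \emph{unique} minimizer from $\mathbf{m}_1$ to $\mathbf{a}$; hence $\gamma_2|_{[0,L/2]}=\gamma_1|_{[0,L/2]}$, so $\dot\gamma_2(0)=\dot\gamma_1(0)$ and $\gamma_2=\gamma_1$, a contradiction. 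Set $\mathbf{a}:=\gamma_1(L/2)\ne\mathbf{b}:=\gamma_2(L/2)$; the triangle inequality then gives $d_\M(\mathbf{m}_i,\mathbf{a})=d_\M(\mathbf{m}_i,\mathbf{b})=L/2$ for $i=1,2$.

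\noindent Finally take $\mathbf{q}_1=\dots=\mathbf{q}_{s-1}=\mathbf{a}$ and $\mathbf{q}_s=\mathbf{b}$ (admissible since $s\ge 3$, and not all equal since $\mathbf{a}\ne\mathbf{b}$), so $\max_j d_\M(\mathbf{x},\mathbf{q}_j)=\max\{d_\M(\mathbf{x},\mathbf{a}),d_\M(\mathbf{x},\mathbf{b})\}$, which equals $L/2$ both at $\mathbf{x}=\mathbf{m}_1$ and at $\mathbf{x}=\mathbf{m}_2$. If some $\mathbf{p}\in\M$ met all the strict inequalities, then $d_\M(\mathbf{m}_1,\mathbf{p})<L/2$ and $d_\M(\mathbf{m}_2,\mathbf{p})<L/2$, whence $L=d_\M(\mathbf{m}_1,\mathbf{m}_2)\le d_\M(\mathbf{m}_1,\mathbf{p})+d_\M(\mathbf{p},\mathbf{m}_2)<L$ --- a contradiction. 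Hence no such $\mathbf{p}$ exists, contradicting that $F$ is contracted, so $\mathit{Agg}^G$ is non-contracted. (For disconnected $G$ one notes that the argument used only the data attached to $u_0$.)

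\noindent The reduction of the first paragraph and the closing triangle-inequality step are routine; the part requiring the most care is the Riemannian geometry of the middle paragraph --- that a diameter pair of a compact manifold of positive dimension is joined by at least two minimizing geodesics, and that two distinct minimizing geodesics with common endpoints have distinct midpoints. Both rest on cut-locus theory (differentiability of the distance function away from cut points reached by several minimizers; uniqueness of the minimizer strictly before the cut time), which is where I expect any genuine difficulty to sit. Once these ``rhombus'' facts are granted, the metric core --- two disjoint open balls of radius $L/2$ whose centres lie at distance $L$ --- is immediate.
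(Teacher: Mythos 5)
Your proof is correct, and it takes a genuinely different route from the paper's. The paper's argument fixes an arbitrary $\x_0\in\M$, takes $\y_0$ a point of $\M$ farthest from $\x_0$ (compactness), and then produces an embedding whose neighbour values at $u_0$ are \emph{not} all equal but whose aggregated value at $u_0$ is exactly $\y_0$; since $\y_0$ realizes $\max_{\y\in\M}d(\x_0,\y)$, the strict inequality demanded by Condition~2 fails at $\x_0$. The existence of such an embedding is obtained by a rank-theorem dimension count comparing $f^{-1}(\y_0)$ with the diagonal set $\Q$, which is exactly where the hypothesis $|\N(u_0)|>2$ is used, and which tacitly assumes the aggregation is smooth and a submersion onto $\M$ at $u_0$. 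You instead make the obstruction independent of what the aggregation outputs: choosing a diameter pair $\mathbf{m}_1,\mathbf{m}_2$ at distance $L$ and the distinct midpoints $\mathbf{a}\neq\mathbf{b}$ of two minimizing geodesics between them, the triangle inequality shows no point of $\M$ is simultaneously within $L/2$ of both $\mathbf{m}_1$ and $\mathbf{m}_2$, so whatever value the (possibly conjugated) aggregation assigns to $u_0$, strictness fails at $\mathbf{m}_1$ or at $\mathbf{m}_2$. Your version buys independence from any regularity or surjectivity assumption on $\mathit{Agg}$ and needs only $|\N(u_0)|\geq 2$ rather than $>2$; the price is the cut-locus input (a farthest point is reached by at least two minimizers; distinct minimizers have distinct midpoints), which you justify correctly and which is classical. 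Two small remarks: the claim that a disconnected Riemannian manifold is unbounded holds only under the convention that $d_\M$ is the intrinsic path-length metric (which is the paper's convention, but worth a word); and the distinct-midpoints step admits an even more elementary proof --- a minimizing concatenation with a corner at a common midpoint could be shortened --- avoiding cut times altogether.
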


\noindent For instance, the unit circle is compact, thus, all aggregation over the unit circle is non-contracted. An example of non-contracted aggregation that could avoid over-smoothing problems is shown in Figure \ref{fig:intro}. In contrast, Euclidean spaces $\R^n$ ($n>0$) are not compact and thus can not benefit from the result of Theorem \ref{theo:agg}. The proof of Theorem \ref{theo:agg} is in Appendix~\ref{app:theo_agg}.

\subsection{Our non-contracted aggregation}\label{sec:aggOnCompact}
To the best of our knowledge, there are mainly two kinds of aggregation defined over manifolds beyond Euclidean spaces.  The \emph{tangential aggregations}~\cite{chami2019hyperbolic} and the \emph{$\kappa$-Left-Matrix-Multiplication}~\cite{bachmann2020constant}. However, the former aggregation is defined on \emph{hyperbolic space}, which is not compact. In contrast, the second aggregation could be applied to \emph{hyperspheres}, which are compact Riemannian manifolds. Their definition is based on the \emph{gyromidpoint} introduced in \cite{ungar2010barycentric} and uses \emph{exponential} and \emph{logarithmic}~\cite{DBLP:conf/nips/GaneaBH18} maps defined on hyperspheres. In the following, we propose a simple way to construct aggregations over the specific family of compact manifolds denoted by $\mathcal{M}_{\Wm}$, which include hypersphere as a special case. 
Here, $\Wm\in \R^{n\times n}$ is a \emph{positive-definite matrix}. That is, $\textbf{x}\Wm\textbf{x}^T>0, \forall \x\in \R^n$. 
\begin{definition}[ACM]\label{def:ourAgg}
Let $\Wm\in \R^{n\times n}$ be a positive-definite matrix.
Consider the Riemannian manifold 
$\mathcal{M}_{\Wm} = \{\textbf{x} = (x_1, \ldots, x_{n}) \in \R^{n}\mid \textbf{x}\Wm\textbf{x}^T=1\} \subset \R^n.$ 
ACM aggregation function $\mathit{Agg}$ over $\mathcal{M}_{\Wm}$ is defined as below. For any graph $G\in \mathcal{G}$ with $m$ nodes, the restricted aggregation $\mathit{Agg}^G$ has the form:
\begin{equation}\nonumber
     \mathit{Agg}^G(\bH)=P_\Wm\left(\bL\cdot \bH\right),\ \ \forall ~\bH\in\M^m, 
\end{equation}
\noindent where $ P_\Wm(\x) = \frac{\x}{\sqrt{\textbf{x}\Wm\textbf{x}^T}},\ \ \forall \x\in\R^n\setminus\{\textbf{0}\}$ ($\textbf{0}$ is the original point of $\R^n$),
and $\bL\in \R^{m\times m}$ takes one of the three forms as introduced in Proposition \ref{prop:aggLi}.
\end{definition}
Since $\M_\Wm$ is compact, the above aggregation $\mathit{Agg}$ is non-contracted (Theorem \ref{theo:agg}). For any graph $G\in\mathcal{G}$, $\mathit{Agg}^G$ satisfies condition 1 of Definition \ref{def:contracted} since $P_\Wm(k\cdot \x)  = \x$ for any $k>0, \x\in \M\subset \R_n$. Therefore,  $\mathit{Agg}$ can be regarded as a generalization of mean aggregations on compact manifold $\M_\Wm$. Moreover, $\M_\Wm$ is a hypersphere when $\Wm$ is the identity matrix. Thus, our aggregation also works for hyperspheres. The complexity of computing $ P_\Wm(\x) $  is $O(n^2)$.


\section{Aggregation over Compact Manifolds}
\label{sec:acm}

Here, we introduce ACM (\textbf{A}ggregation over \textbf{C}ompact \textbf{M}anifolds), which integrates our aggregation function over compact manifolds $\M_\Wm$ (Definition \ref{def:ourAgg}) into standard SGC, GCN and GAT. Let $G\in\mathcal{G}$ be a graph with nodes $\{u_1, \ldots, u_m\}$.

\noindent
\textbf{SGC}
We integrate our aggregation into SGC by setting $\bL = \tD^{ -\frac{1}{2}}\tA\tD^{-\frac{1}{2}}$. Then, the embedding is updated by
$
\bH^{(k)}=P_\Wm\left(\tD^{ -\frac{1}{2}}\tA\tD^{-\frac{1}{2}}\cdot \bH^{(k-1)}\right).
$

\noindent
\textbf{GCN}
The adaptation of our aggregation on GCN is defined by Equations (\ref{equ:GCNtrans}) below. 
The aggregation step of GCN is defined the same as above. Next, we show how to build a transformation function between the compact manifold $\mathcal{M}_{\Wm}\subset \R^n$. 

Let
$N_{b} = \{\textbf{x} = (x_1, \ldots, x_n)\in \R^n\mid x_1 = b\}$ 
be a hyperplane in $\R^n$. Then, our transformation function from $\M_\Wm$ to $\M_\Wm$ takes three steps:
\begin{displaymath}
    \xymatrix@C=6ex{ 
        \M_\Wm \ar[r]^{\text{Step 1}} & N_{b}\ar[r]^{\text{Step 2}} &  \R^n  \ar[r]^{\text{Step 3}} & \M_\Wm.
    }
\end{displaymath}

\noindent In Step 1, we map $\mathcal{M}_{\Wm}$ to the hyperplane $N_{b}$ using \emph{push forward} ($\mathit{PF}$) function as illustrated in Fig. \ref{fig:PB}; Then, at Step 2, we map  $N_{b}$ to $\R^n$ by the standard transformation functions (i.e., $\x\mapsto \sigma(\x\cdot\W)$); In the last step, we map $\R^n$ to the manifold $\mathcal{M}_{\Wm}$ by the \emph{push back} ($\mathit{PB}$) function as shown in Fig. \ref{fig:PB}. 
Finally, our GCN layer update $\bH^{(k-1)}$ to a new embedding $\bH^{(k)}$ as below.
\begin{align}
    \overline{\bH}^{(k)}=P_\Wm\left(\tD^{ -\frac{1}{2}}\tA\tD^{-\frac{1}{2}} \cdot  \bH^{(k-1)}\right),\ \  
\bH_u^{(k)}= \mathit{PB}\bigg(\ \sigma\Big( \mathit{PF}(\overline{\bH}^{(k)})\cdot \W^{(k)}\Big)\ \bigg), \label{equ:GCNtrans}
\end{align}

\noindent where $\sigma$ is an activate function, $\W^{(k)}\in \R^{n\times n}$.  The functions $\mathit{PF}$ and $\mathit{PB}$ is a generalization of  \emph{Stereographic projection} and its inverse map from $\M_\Wm$ to $N_{b}$ with center $\x_0 = (a_0, 0, \ldots, 0)\in\M_\Wm$, where $a_0 = \Wm_{11}^{-\frac{1}{2}}$, as in Fig.\ref{fig:PB}.

\begin{figure}[b]
    \centering
    \includegraphics[width = 6cm]{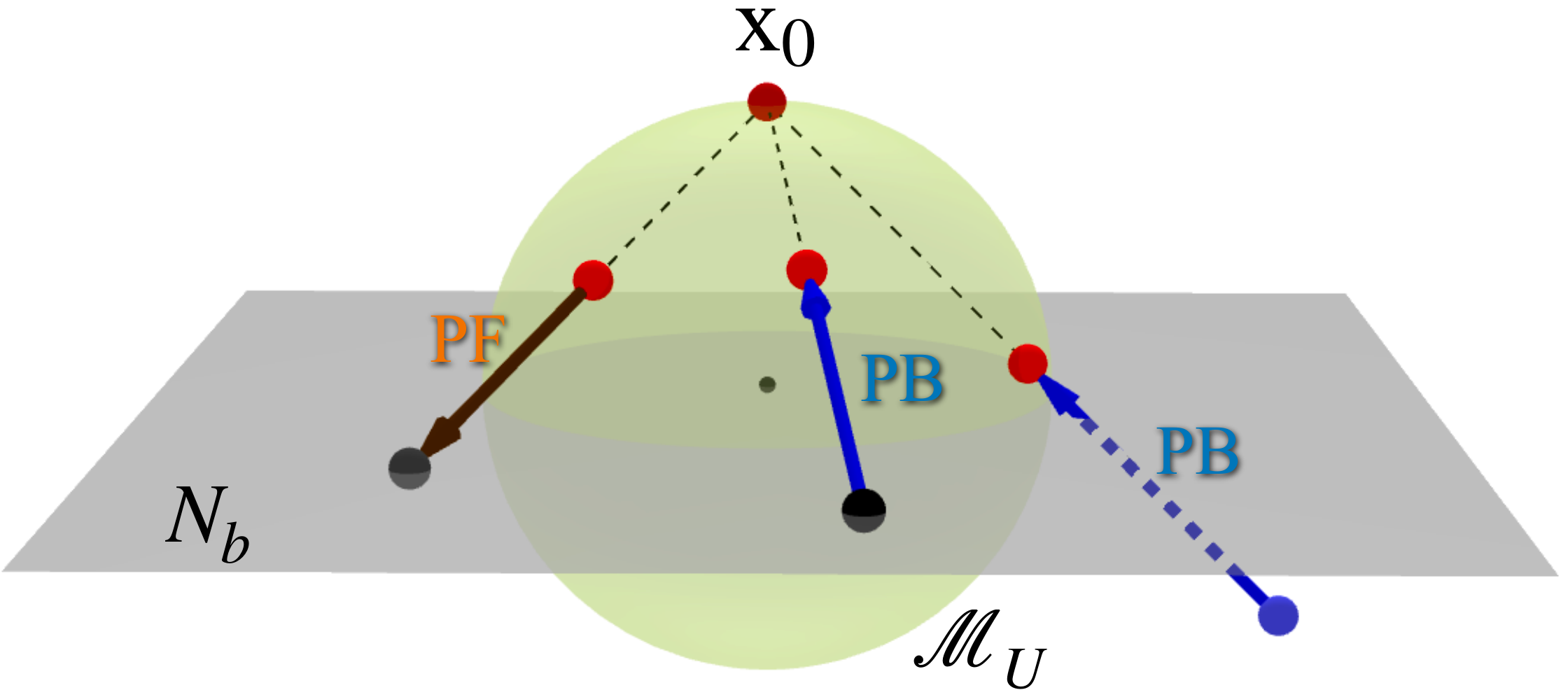}
    \caption{Illustration of PB ({\color{orange}orange}) and PF ({\color{blue}blue}), points in $\M_\Wm$ (resp. $N_b$) are in {\color{red} red} (resp. black). The formal definitions can be found in the supply material.}
    \label{fig:PB}
\end{figure}

\noindent
\textbf{GAT}
Our GAT layer is defined similarly to our GCN layer but with the weighted $\bL = \mathit{Att}(\bH^{(k-1)})$, where $\mathit{Att}$ is the attention function introduced in \cite{velivckovic2017graph}.

\noindent Finally, 
following the works of \cite{he2016deep,li2018deeper}, for node classification task, we use \textit{softmax} classifier in last layer of GNN  to predict the labels of nodes in the given graph $G$.
\section{Experiments} 
\label{sec:experiments}

This section validates the following hypotheses: (1) ACM successfully mitigates over-smoothing in deep layers within GNNs, as evidenced by a reduced decline in performance; (2) ACM effectively alleviate over-smoothing in a more challenging context, specifically the scenario of missing features.

\subsection{Experiments Setup}


\textbf{Datasets} We use four homophilic datasets (Cora, Citeseer, Pubmed~\cite{yang2016revisiting}, and CoauthorCS~\cite{shchur2018pitfalls}) and five heterophilic datasets (Texas, Cornell, Wisconsin, Actor, and Chameleon\cite{pei2020geom}). In the heterophilic datasets, the assumption of neighboring feature similarity is not applicable. Our focus is on standard transductive node classification tasks across these datasets. Details can be found in Appendix~\ref{sec:dataset}.


\noindent
\textbf{Experiment settings}
\label{setting}
We conducted node classification experiments in two settings~\cite{zhao2019pairnorm}: the standard scenario and the \emph{missing feature scenario}. The latter aims to highlight deep Graph Neural Networks' performance by assuming that initial node embeddings in the test and validation sets are absent, replaced with zero vectors. This reflects real-world scenarios, such as in social networks, where new users may have limited connections~\cite{rashid2008learning}, requiring more information aggregation steps for effective representation.


\noindent\textbf{Implementation.} We consider three standard GNN models, namely GCN~\cite{kipf2016semi}, GAT~\cite{velivckovic2017graph}, and SGC~\cite{wu2019simplifying}, and augment them with techniques alleviating oversmoothing: PairNorm~\cite{zhao2019pairnorm}, BatchNorm~\cite{ioffe2015batch}, DropEdge~\cite{rong2019dropedge}, DGN~\cite{zhou2020towards}, and DeCorr~\cite{jin2022feature}. Moreover, we incorporate recent SOTA models, including GPRGNN \cite{chien2020adaptive}, GCNII \cite{chen2020simple}, DAGNN \cite{liu2020towards}, and APPNP~\cite{DBLP:conf/iclr/KlicperaBG19}. To analyze the impact of different methods, we vary the number of layers (5/60/120 for SGC, and 2/15/30/45/60 for GCN and GAT). 
By convention,  SGC typically has more layers than GCN and GAT due to its simpler design and fewer parameters, requiring more layers for best performance and to effectively capture data details. 
We hence evaluate deeper models. Hyperparameter settings for GNN models and optimizers follow previous works~\cite{zhao2019pairnorm,zhou2020towards,jin2022feature}. The hidden layer size is set to 16 units for GCN, GAT, and SGC. GAT has 1 attention head. Training involves a maximum of 1000 epochs with Adam optimizer~\cite{kingma2014adam} and early stopping. GNN weights are initialized using the Glorot algorithm~\cite{glorot2010understanding}. Each experiment is repeated 5 times, and mean values are reported. 
For ACM, we employ $\mathit{tanh}$ as the activation function in hidden layers. We introduce two distinct variants of the ACM, differentiated by the configuration of the parameter matrix $\Wm$. (i)  the first variant, still denoted as $\text{ACM}$, sets the matrix $\Wm$ to the identity matrix $\mathbf{I}$; (ii) the second variant, denoted as ACM*, defines the matrix $\Wm$ as a positive definite diagonal matrix, where each diagonal element is a trainable parameter. Therefore, for $\text{ACM}$,  the underlying manifold is the unit hypersphere, while ACM* offers a variable manifold (e.g., ellipsoid) that can be tailored through training to better suit the underlying data distribution.


\subsection{Experiment Results}
In this section, we compare the performance of ACM with previous methods that tackle the over-smoothing issue on SGC, GCN, and GAT. Refer to \cite{zhou2020towards,jin2022feature}  for all the results of the methods other than ACM 
unless stated otherwise.




\noindent
\textbf{Mitigate the performance drop in deep SGC.}
\label{sec:sgc_result} 
SGC is a proper model to show the effectiveness of ACM in relieving over-smoothing. As SGC has only one feature transformation step, the performance of SGC highly depends on the information aggregation steps.

We present the performances of SGCs with 5/60/120 layers.  \noindent Table~\ref{table:sgcwf}  summarizes the results of applying   different over-smoothing alleviation approaches to SGC. “None” indicates the vanilla SGC without over-smoothing alleviation.  
As shown in Table~\ref{table:sgcwf}, ACM  outperforms the other over-smoothing alleviation methods in most of the cases. In particular, on the Citeseer dataset with 120 layers, the improvements over None, BatchNorm, PairNorm, and DGN achieved by ACM are by a margin of 57.5\%, 19\%, 19.8\%,  and 14.3\%, respectively.

\noindent  To show the detailed performance differences on each layer, we illustrate in Figure \ref{fig:test_acc_SGC_MS1_all}(a,b) the test accuracy of SGC trained with different methods with layers from 1 to 120 on datasets Cora and CoauthorCS. 
We can see that ACM behaved the best in slowing down the performance drop on Cora  and comparably with the best system DGN on CoauthorCS. The same conclusion holds for other datasets  as on Cora (see Appendix \ref{appendix:figures}).

\begin{table}
  \renewcommand*{\arraystretch}{1.1}
  \setlength{\tabcolsep}{1.5mm}
  \centering
  \caption{\small{Node classification accuracy (\%) on SGC. The two best performing methods are highlighted in \textcolor{blue}{\textbf{blue}} (First), \textcolor{violet}{\textbf{violet}}  (Second)}.}
  \label{table:sgcwf}
  \resizebox{1.0\columnwidth}{!}{
    \begin{tabular}{c|ccc|ccc|ccc|ccc}
      \hline
      \textbf{Datasets} & \multicolumn{3}{c}{Cora} & \multicolumn{3}{|c}{Citeseer} & \multicolumn{3}{|c}{Pubmed} & \multicolumn{3}{|c}{CoauthorCS} \\
      \hline
      \textbf{\#Layers} & 5 & 60 & 120 & 5 & 60 & 120 & 5 & 60 & 120 & 5 & 60 & 120 \\
      \hline

    None & 75.8 & 29.4 & 25.1 & \textcolor{blue}{\textbf{69.6}} & \textcolor{violet}{\textbf{66.3}} & 9.4 & 71.5 & 34.2 & 18.0 & 89.8 & 10.2 & 5.8 \\
    BatchNorm & 76.3 & 72.1 & 51.2 & 58.8 & 46.9 & 47.9 & 76.5 & 75.2 & 70.4 & 88.7 & 59.7 & 30.5 \\
    PairNorm & 75.4 & 71.7 & 65.5 & 64.8 & 46.7 & 47.1 & 75.8 & 77.1 & 70.4 & 86.0 & 76.4 & 52.6 \\
    DGN & \textcolor{violet}{\textbf{77.9}} & \textcolor{violet}{\textbf{77.8}} & \textcolor{violet}{\textbf{73.7}} & \textcolor{violet}{\textbf{69.5}} & 53.1 & \textcolor{violet}{\textbf{52.6}} & \textcolor{violet}{\textbf{76.8}} & \textcolor{violet}{\textbf{77.4}} & \textcolor{violet}{\textbf{76.9}} & \textcolor{violet}{\textbf{90.2}} & \textcolor{violet}{\textbf{81.3}} & \textcolor{violet}{\textbf{60.8}} \\
    ACM & \textcolor{blue}{\textbf{78.5}} & \textcolor{blue}{\textbf{80.0}} & \textcolor{blue}{\textbf{77.0}} & 65.0 & \textcolor{blue}{\textbf{67.4}} & \textcolor{blue}{\textbf{66.9}} & \textcolor{blue}{\textbf{76.9}} & \textcolor{blue}{\textbf{78.3}} & \textcolor{blue}{\textbf{78.6}} & \textcolor{blue}{\textbf{90.9}} & \textcolor{blue}{\textbf{80.1}} & \textcolor{blue}{\textbf{57.4}} \\
    \hline
    \end{tabular}}
\end{table}

\noindent
\textbf{Mitigation of the performance drop in deeper GCN, GAT.}
\label{sec:gcngat_res} We integrated our ACM method into  GCN and GAT and measured 
the performance of different GNNs with 30/45/60 layers in Table~\ref{table:gatgcn}. The full table is in ~\ref{sec:Tablesappendix}. The results of None, BatchNorm, PairNorm, DGN at 45/60 layers were obtained by running the source code supplied by~\cite{zhou2020towards}. The results of Dropedge and Decorr at 45/60 layers were collected by running the source code from~\cite{jin2022feature}.

From Table~\ref{table:gatgcn}, we can see that ACM can greatly improve the performance of deeper GNNs on these datasets. 
In particular, given the same number of layers, ACM consistently achieves the best performance for almost all cases, dramatically outperforms recent baselines on deep models (30/45/60 layers), and keeps the comparable performance in the lower-layers model. 
For instance, on the Cora dataset, at 60 layers of GCN setting, ACM improves the classification accuracy of None, BatchNorm, PairNorm, Dropedge, DGN and DeCorr by 56.5\% 11.1\%, 22.5\%, 56.5\%, 16.9\%, 56.5\%, respectively.

\begin{table}
\renewcommand*{\arraystretch}{1.1}
\footnotesize
\setlength{\tabcolsep}{1.5mm}
\centering
\caption{\small{Node classification accuracy (\%) on GCN, GAT.}}
\label{table:gatgcn}
\resizebox{\textwidth}{!}{
\begin{tabular}{c|c|ccc|ccc|ccc|ccc}
\hline

\multicolumn{2}{c|}{Datasets} &      \multicolumn{3}{c}{Cora} &     \multicolumn{3}{|c}{Citeseer} &    \multicolumn{3}{|c}{Pubmed} &   \multicolumn{3}{|c}{CoauthorCS}    \\ \hline
\textbf{Methods} & \textbf{\#Layers  } & L30 & L45  &    L60 & L30 & L45  &    L60 & L30 & L45  &    L60 & L30 & L45  &    L60  \\ \hline
 \multirow{9}*{\textbf{GCN}}  &    None & 13.1  & 13 & 13.0  & 9.4  & 7.7 & 7.7  & 18.0  & 18 & 18.0  & 3.3  & 3.3 & 3.3   \\
 &   BN & 67.2  & 60.2 & 58.4  & 47.9  & 38.7 & 36.5  & 70.4  & 72.9 & 67.1  & 84.7  & 80.1 & 79.1   \\
 &   PN & 64.3  & 54.5 & 47.0  & 47.1  & 43.1 & 37.1  & 70.4  & 63.4 & 60.5  & 64.5  & 70 & 66.5   \\
 &   DropEdge    & 45.4  & 13 & 13.0  & 31.6  & 7.7 & 7.7  & 62.1  & 18 & 18.0  & 31.9  & 3.3 & 3.3   \\
 &   DGN & 73.2  & 67.8 & 52.6  & 52.6  & 45.8 & 40.5  & \textcolor{violet}{\textbf{76.9}} & 73.4 & 72.8  & 84.4  & 83.7 & \textcolor{violet}{\textbf{82.1} }  \\
 &   DeCorr & \textcolor{violet}{\textbf{73.4}} & 38.9 & 13.0  & \textcolor{blue}{\textbf{67.3}} & 37.1 & 7.7  & \textcolor{blue}{\textbf{77.3}} & 32.5 & 13.0  & \textcolor{violet}{\textbf{84.5}} & 29 & 3.3   \\
 &   ACM & \textcolor{blue}{\textbf{73.5}} & \textcolor{blue}{\textbf{71.6}} & \textcolor{violet}{\textbf{69.5}} & 55.1  & \textcolor{violet}{\textbf{56.5}} & \textcolor{blue}{\textbf{53.5}} & 75.7  & \textcolor{violet}{\textbf{74.3}} & \textcolor{violet}{\textbf{74.4}} & \textcolor{blue}{\textbf{85.5}} & \textcolor{violet}{\textbf{84.6}} & \textcolor{blue}{\textbf{82.4}}  \\
 &   ACM* & 72.4  & \textcolor{blue}{\textbf{71.6}} & \textcolor{blue}{\textbf{70.3}} & \textcolor{violet}{\textbf{56.5}} & \textcolor{blue}{\textbf{57.0}} & \textcolor{violet}{\textbf{53.4}} & 74.4  & \textcolor{blue}{\textbf{76.3}} & \textcolor{blue}{\textbf{74.8}} & 84.3  & \textcolor{blue}{\textbf{84.7}} & 70.7   \\ \hline
 \multirow{9}*{\textbf{GAT}}  &    None & 13.0  & 13 & 13.0  & 7.7  & 7.7 & 7.7  & 18.0  & 18 & 18.0  & 3.3  & 3.3 & 3.3   \\
 &   BN & 25.0  & 21.6 & 16.2  & 21.4  & 21.1 & 18.1  & 46.6  & 45.3 & 29.4  & 16.7  & 4.2 & 2.6   \\
 &   PN & 30.2  & 28.8 & 19.3  & 33.3  & 30.6 & 27.3  & 58.2  & 58.8 & 58.1  & 48.1  & 30.4 & 26.6   \\
 &   DropEdge    & 51.0  & 13 & 13.0  & 36.1  & 7.7 & 7.7  & 64.7  & 18 & 18.0  & 52.1  & 3.3 & 3.3   \\
 &   DGN & 51.3  & 44.2 & 38.0  & 45.6  & 32.8 & 27.5  & 73.3  & 53.7 & 60.1  & 75.5  & 20.9 & 44.8   \\
 & DeCorr & 54.3  & 18.3 & 13.0  & 46.9  & 18.9 & 7.7  & 74.1  & 48.9 & 18.0  & 77.3  & 19.2 & 3.3   \\
 & ACM & \textcolor{violet}{\textbf{67.4}} & \textcolor{violet}{\textbf{53.5}} & \textcolor{violet}{\textbf{48.5}} & \textcolor{violet}{\textbf{49.5}} & \textcolor{violet}{\textbf{38.8}} & \textcolor{violet}{\textbf{38.4}} & \textcolor{violet}{\textbf{75.4}} & \textcolor{violet}{\textbf{72.0}} & \textcolor{violet}{\textbf{68.4}} & \textcolor{violet}{\textbf{84.8}} & \textcolor{violet}{\textbf{79.8}} & \textcolor{violet}{\textbf{74.2}}  \\
 & ACM* & \textcolor{blue}{\textbf{71.4}} & \textcolor{violet}{\textbf{53.6}} & \textcolor{blue}{\textbf{61.3}} & \textcolor{blue}{\textbf{56.2}} & \textcolor{blue}{\textbf{48.3}} & \textcolor{blue}{\textbf{47.2}} & \textcolor{blue}{\textbf{76.7}} & \textcolor{blue}{\textbf{75.0}} & \textcolor{violet}{\textbf{68.5}} & \textcolor{blue}{\textbf{85.0}} & \textcolor{blue}{\textbf{82.6}} & \textcolor{blue}{\textbf{76.3}}  \\ \hline

\end{tabular}
}
\end{table}

\begin{figure}
\centering
\includegraphics[width=\linewidth]{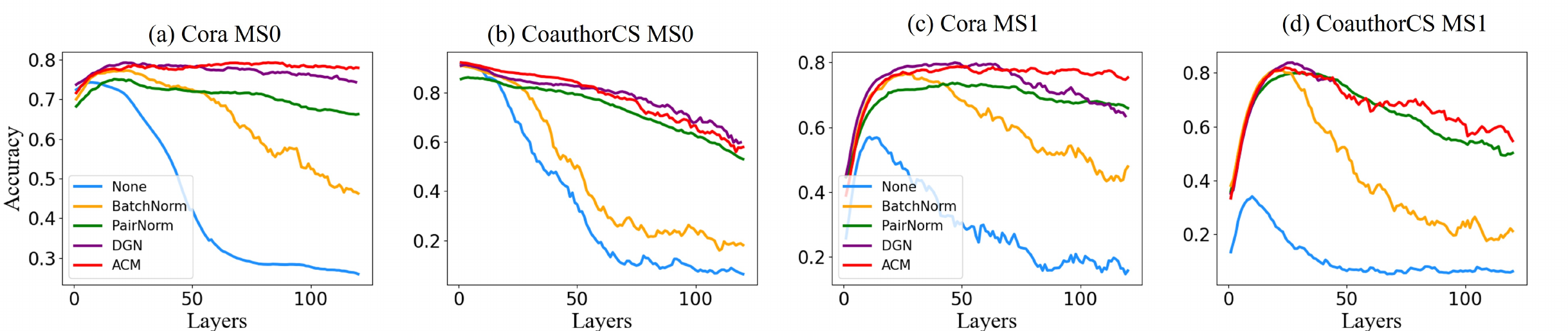}
\caption{Test accuracy of SGC with different methods and layers on Cora and CoauthorCS datasets under miss rate 0 (MS0) and miss rate 1 (MS1) scenario.}
    \label{fig:test_acc_SGC_MS1_all}
\end{figure}

\begin{table}
\renewcommand*{\arraystretch}{1.1}
\footnotesize
	\center
    \caption{
    Test accuracy (\%) on missing feature setting.} 
    \label{table:missrate}	
	\resizebox{1\columnwidth}{!}{
   \setlength{\tabcolsep}{1pt}
          \begin{tabular}{c|cccc|cccc|cccc}
    \hline
    \textbf{Model} & \multicolumn{4}{c}{SGC} & \multicolumn{4}{|c}{GCN} & \multicolumn{4}{|c}{GAT} \\ \hline

        \textbf{Methods} & Cora & Citeseer & Pubmed  & CCS & Cora & Citeseer & Pubmed  & CCS & Cora & Citeseer & Pubmed  & CCS  \\ \hline
    None & 63.4(5) & 51.2(40) & 63.7(5) & 71(5) & 57.3(3) & 44(6) & 36.4(4) & 67.3(3) & 50.1(2) & 40.8(4) & 38.5(4) & 63.7(3) \\
    BN & 78.5(20) & 50.4(20) & 72.3(50) & 84.4(20) & 71.8(20) & 45.1(25) & 70.4(30) & 82.7(30) & 72.7(5) & 48.7(5) & 60.7(4) & 80.5(6) \\
    PN & 73.4(50) & 58(120) & 75.2(30) & 80.1(10) & 65.6(20) & 43.6(25) & 63.1(30) & 63.5(4) & 68.8(8) & 50.3(6) & 63.2(20) & 66.6(3) \\
    DGN & \textcolor{violet}{\textbf{80.2(5)}} & \textcolor{violet}{\textbf{58.2(90)}} & \textcolor{violet}{\textbf{76.2(90)}} & \textcolor{violet}{\textbf{85.8(20)}} & 67(6) & 44.2(8) & 69.3(6) & 68.6(4) & 67.2(6) & 48.2(6) & 67.2(6) & 75.1(4) \\
    ACM & \textcolor{blue}{\textbf{80.7(90)}} & \textcolor{blue}{\textbf{59.5(109)}} & \textcolor{blue}{\textbf{76.3(91)}} & \textcolor{blue}{\textbf{86.5(26)}} & \textcolor{violet}{\textbf{76.3(20)}} & \textcolor{blue}{\textbf{50.2(30)}} & \textcolor{violet}{\textbf{72(30)}} & \textcolor{violet}{\textbf{83.7(25)}} & \textcolor{violet}{\textbf{75.8(8)}} & \textcolor{blue}{\textbf{54.5(5)}} & \textcolor{violet}{\textbf{72.3(20)}} & \textcolor{violet}{\textbf{83.6(15)}} \\
    ACM* & \textbf{-} & \textbf{-} & \textbf{-} & \textbf{-} & \textcolor{blue}{\textbf{73.8(20)}} & \textcolor{violet}{\textbf{49.1(30)}} & \textcolor{blue}{\textbf{73.3(15)}} & \textcolor{blue}{\textbf{84.3(20)}} & \textcolor{blue}{\textbf{72.8(15)}} & \textcolor{violet}{\textbf{46.5(6)}} & \textcolor{blue}{\textbf{72.4(15)}} & \textcolor{blue}{\textbf{83.7(15)}} \\ \hline

  \end{tabular}
               }
\end{table}

\begin{table}[t]
	\center
    \caption{Node classification accuracy (\%) for eight datasets} 
    \label{table:eight_dataset}	 
    	\resizebox{1.0\columnwidth}{!}{
        \begin{tabular}{c|ccc|c c c |cc }
        \hline      
      \textbf{Methods} & \textbf{Cora} &  \textbf{Citeseer} &   \textbf{Pubmed} & \textbf{Texas} &  \textbf{Cornell} &   \textbf{Wisconsin}&   \textbf{Actor} &\textbf{Chameleon} \\
      \hline
      GCN& 82.3&71.7&78.6&67.2&57.4&56.0 &26.7 & 43.3\\ 
      GAT & 81.2 & 69.5 & 77.4 &71.6&63.6&64.1&27.8  & 48.1 \\ 
      DAGNN & 84.4 & \color{blue}\textbf{73.3} &  \color{violet}\textbf{80.3} & 78.6 & 68.1  & 71.3 & 32.4 & 47.9  \\
    APPNP & 83.3 &  71.8 &  80.1 &  73.5 & 54.3  &  65.4 & 34.5 &  \color{blue}\textbf{54.3} \\
      GPRGNN & 83.0 & 71.3 & 71.5 & \color{violet}\textbf{78.7} & 76.7 & 71.2 & 37.1  & 48.8  \\
      GCNII & \color{violet}\textbf{84.5}  & 71.2 & 79.1 & 67.2 & \color{violet}\textbf{80.8} & \color{violet}\textbf{72.1} &  \color{violet}\textbf{33.4} & \color{violet}\textbf{51.2} \\
      \hline
      GPR+ACM & \color{blue}\textbf{84.7} & \color{violet}\textbf{71.9} & \color{blue}\textbf{80.4} &   \color{blue}\textbf{80.3} & \color{blue}\textbf{87.3}  & \color{blue}\textbf{88.7} & \color{blue}\textbf{38.8}  &  50.8\\
      \hline

 \end{tabular}
 }
\end{table}
\noindent
\textbf{Enabling deeper GNNs under missing feature setting.} Table~\ref{table:sgcwf} and ~\ref{table:gatgcn} show that deeper GNNs often perform worse than shallower ones. The ``missing feature setting” experiments (see Section~\ref{setting}) executed the node classification task in  more complex scenario. Such  complex setting requires massive layers for GNNs to achieve good performance, for which ACM shows particular benefits since over-smoothing issue becomes severer with increasing layers.


In Table~\ref{table:missrate}, 
Acc is the best test accuracy of a model with the optimal layer number \#K. 
We observe that ACM outperforms the other over-smoothing alleviation methods in almost all cases.
The average accuracy improvements achieved by ACM over None, BatchNorm, PairNorm, and DGN are 19.62\%, 5.4\%, 9.3\%, and 1.18\%, respectively. 
Moreover, the best performances  were achieved with large layers. 
For instance, on the Cora dataset, ACM behaved  the best with 90, 34, and 11 layers for SGC, GCN, and GAT, respectively, 
evidently higher than those widely-used  shallow vanilla models, i.e., with 2 or 3 layers. 

To better compare 
different methods  with various layers,  
we run the experiments for all the layers from 1 to 120 on SGC. The result is illustrated by Figure \ref{fig:test_acc_SGC_MS1_all}(c,d).
We can see that ACM achieved best performance on deep SGCs on Cora and Citeseer. The same conclusion holds for other datasets (see Appendix \ref{appendix:figures}).

\noindent\textbf{Combining with Other Deep GNN Methods} 
To ascertain whether ACM can function as a supplementary technique for alleviating over-smoothing, we choose to integrate it with one of the strongest baselines, GPRGNN.
Our experiments on eight benchmark datasets, with the average accuracy reported for 10 random seeds in Table~\ref{table:eight_dataset}, shows that ACM consistently enhances GRPGNN across all datasets. For example, ACM enhance the GPRGNN by 10.6\% on Cornell and 17.5\% on Wisconsin, respectively. This supports the idea that ACM  can enhance model performance.

\section{Conclusion}
\label{sec:conclusion}

This paper proposed a general theory for explaining over-smoothing in Riemannian manifolds, which attributes the over-smoothing to the contracted aggregation problem.
Inspired by this theoretical result, we proposed a general framework ACM---by constructing non-contracted aggregation functions on compact 
Riemann manifold---to
effectively mitigate the over-smoothing and encourage deeper GNNs to learn distinguishable node embeddings. The effectiveness of ACM 
has been demonstrated through the extensive experimental results.

In the future, we aim to extend our theoretical findings to encompass   transformation functions. Additionally, we intend to employ the ACM framework in tasks requring deeper GNNs, such as multi-hop question answering.

\bibliographystyle{splncs04}

\begin{thebibliography}{10}
\providecommand{\url}[1]{\texttt{#1}}
\providecommand{\urlprefix}{URL }
\providecommand{\doi}[1]{https://doi.org/#1}

\bibitem{bachmann2020constant}
Bachmann, G., B{\'e}cigneul, G., Ganea, O.: Constant curvature graph convolutional networks. In: ICML. pp. 486--496. PMLR (2020)

\bibitem{DBLP:conf/nips/BalazevicAH19}
Balazevic, I., Allen, C., Hospedales, T.M.: Multi-relational poincar{\'{e}} graph embeddings. In: NeurIPS. pp. 4465--4475 (2019)

\bibitem{cai2020note}
Cai, C., Wang, Y.: A note on over-smoothing for graph neural networks. CoRR  \textbf{abs/2006.13318} (2020)

\bibitem{chami2019hyperbolic}
Chami, I., \text{Et al}: Hyperbolic graph convolutional neural networks. In: NeurIPS. pp. 4869--4880 (2019)

\bibitem{chen2020measuring}
Chen, D., Lin, Y., \text{Et al.}: Measuring and relieving the over-smoothing problem for graph neural networks from the topological view. CoRR  \textbf{abs/1909.03211} (2019)

\bibitem{chen2020simple}
Chen, M., Wei, Z., Huang, Z., Ding, B., Li, Y.: Simple and deep graph convolutional networks. In: ICML. pp. 1725--1735. PMLR (2020)

\bibitem{chien2020adaptive}
Chien, E., Peng, J., Li, P., Milenkovic, O.: Adaptive universal generalized pagerank graph neural network. In: {ICLR} (2021)

\bibitem{DBLP:conf/nips/GaneaBH18}
Ganea, O., B{\'{e}}cigneul, G., Hofmann, T.: Hyperbolic neural networks. In: NeurIPS. pp. 5350--5360 (2018)

\bibitem{gao2018large}
Gao, H., Wang, Z., Ji, S.: Large-scale learnable graph convolutional networks. In: {KDD}. pp. 1416--1424. {ACM} (2018)

\bibitem{glorot2010understanding}
Glorot, X., Bengio, Y.: Understanding the difficulty of training deep feedforward neural networks. In: {AISTATS}. pp. 249--256 (2010)

\bibitem{DBLP:conf/iclr/GulcehreDMRPHBB19}
G{\"{u}}l{\c{c}}ehre, {\c{C}}., \text{Et al.}: Hyperbolic attention networks. In: {ICLR} (2019)

\bibitem{hamilton2020graph}
Hamilton, W.L.: Graph Representation Learning. Synthesis Lectures on Artificial Intelligence and Machine Learning (2020)

\bibitem{hamilton2017inductive}
Hamilton, W.L., \text{Et al}: Inductive representation learning on large graphs. In: {NIPS}. pp. 1024--1034 (2017)

\bibitem{he2016deep}
He, K., \text{Et al.}.: Deep residual learning for image recognition. In: {CVPR}. pp. 770--778. {IEEE} Computer Society (2016)

\bibitem{hou2020measuring}
Hou, Y., Zhang, J., \text{Et al.}: Measuring and improving the use of graph information in graph neural networks. In: {ICLR} (2020)

\bibitem{DBLP:journals/corr/abs-2008-09864}
Huang, W., \text{Et al.}: Tackling over-smoothing for general graph convolutional networks. CoRR  (2020)

\bibitem{ioffe2015batch}
Ioffe, S., Szegedy, C.: Batch normalization: Accelerating deep network training by reducing internal covariate shift. In: ICML. pp. 448--456 (2015)

\bibitem{jin2022feature}
Jin, W., \text{Et al.}: Feature overcorrelation in deep graph neural networks: {A} new perspective. In: {KDD}. pp. 709--719. {ACM} (2022)

\bibitem{DBLP:conf/cvpr/KhrulkovMUOL20}
Khrulkov, V., \text{Et al.}: Hyperbolic image embeddings. In: {CVPR}. pp. 6417--6427. Computer Vision Foundation / {IEEE} (2020)

\bibitem{kingma2014adam}
Kingma, D.P., Ba, J.: Adam: {A} method for stochastic optimization. In: {ICLR} (2015)

\bibitem{kipf2016semi}
Kipf, T.N., Welling, M.: Semi-supervised classification with graph convolutional networks. CoRR  \textbf{abs/1609.02907} (2016)

\bibitem{klicpera2018predict}
Klicpera, J., \text{Et al.}: Predict then propagate: Graph neural networks meet personalized pagerank. In: {ICLR} (2019)

\bibitem{DBLP:conf/iclr/KlicperaBG19}
Klicpera, J., \text{Et al}: Predict then propagate: Graph neural networks meet personalized pagerank. In: {ICLR} (2019)

\bibitem{Lee_2013}
Lee, J.: Introduction to Smooth Manifolds. Graduate Texts in Mathematics

\bibitem{li2019deepgcns}
Li, G., M{\"{u}}ller, M., \text{Et al.}: Deepgcns: Can gcns go as deep as cnns? In: {ICCV}. pp. 9266--9275. {IEEE} (2019)

\bibitem{li2018deeper}
Li, Q., Han, Z., Wu, X.M.: Deeper insights into graph convolutional networks for semi-supervised learning. In: {AAAI}. pp. 3538--3545 (2018)

\bibitem{liu2020towards}
Liu, M., Gao, H., Ji, S.: Towards deeper graph neural networks. In: {KDD}. pp. 338--348. {ACM} (2020)

\bibitem{mendelson1990introduction}
Mendelson, B.: Introduction to topology (1990)

\bibitem{DBLP:conf/iclr/OonoS20}
Oono, K., Suzuki, T.: Graph neural networks exponentially lose expressive power for node classification. In: {ICLR} (2020)

\bibitem{pei2020geom}
Pei, H., \text{Et al.}: Geom-gcn: Geometric graph convolutional networks. In: {ICLR} (2020)

\bibitem{rashid2008learning}
Rashid, A.M., Karypis, G., \text{Et al.}: Learning preferences of new users in recommender systems: an information theoretic approach. {SIGKDD} Explor. pp. 90--100 (2008)

\bibitem{rong2019dropedge}
Rong, Y., Huang, W., Xu, T., Huang, J.: Dropedge: Towards deep graph convolutional networks on node classification. In: {ICLR} (2020)

\bibitem{shchur2018pitfalls}
Shchur, O., Mumme, M., Bojchevski, A., G{\"{u}}nnemann, S.: Pitfalls of graph neural network evaluation. CoRR  \textbf{abs/1811.05868} (2018)

\bibitem{DBLP:journals/jmlr/SrivastavaHKSS14}
Srivastava, N., \text{Et al.}: Dropout: a simple way to prevent neural networks from overfitting. J. Mach. Learn. Res.  \textbf{15}(1),  1929--1958 (2014)

\bibitem{ungar2010barycentric}
Ungar, A.A.: Barycentric calculus in Euclidean and hyperbolic geometry: A comparative introduction (2010)

\bibitem{velivckovic2017graph}
Velickovic, P., Cucurull, G., Casanova, A., Romero, A., Li{\`{o}}, P., Bengio, Y.: Graph attention networks. CoRR  \textbf{abs/1710.10903} (2017)

\bibitem{wu2019simplifying}
Wu, F., \text{Et al.}: Simplifying graph convolutional networks. In: {ICML}. vol.~97, pp. 6861--6871. {PMLR} (2019)

\bibitem{xu2018powerful}
Xu, K., Hu, W., \text{Et al.}: How powerful are graph neural networks? In: {ICLR} (2019)

\bibitem{xu2018representation}
Xu, K., Li, C., Tian, Y., \text{Et al.}: Representation learning on graphs with jumping knowledge networks. In: ICML. pp. 5449--5458. PMLR (2018)

\bibitem{yang2016revisiting}
Yang, Z., \text{Et al.}: Revisiting semi-supervised learning with graph embeddings. In: ICML. {JMLR} Workshop and Conference Proceedings, vol.~48, pp. 40--48 (2016)

\bibitem{zhao2019pairnorm}
Zhao, L., Akoglu, L.: Pairnorm: Tackling oversmoothing in gnns. In: {ICLR} (2020)

\bibitem{zhou2020graph}
Zhou, J., Cui, G., \text{Et al.}: Graph neural networks: {A} review of methods and applications. {AI} Open  \textbf{1},  57--81 (2020)

\bibitem{zhou2020towards}
Zhou, K., \text{Et al.}: Towards deeper graph neural networks with differentiable group normalization. In: NeurIPS (2020)

\end{thebibliography}

\newpage
\appendix
\onecolumn

\section{Details of our model}
\paragraph{$\mathit{PF}$ and $\mathit{PB}$ }
 The formal definitions of $\mathit{PF}$ and $\mathit{PB}$ are given below:
\begin{align*}
    \mathit{PF}(\textbf{w}) &= \left(\frac{b-a_0}{w_1-a_0}\right)\cdot (\textbf{w}-\textbf{x}_0)+\textbf{x}_0,\ \ \textbf{w}{=}(w_1, {\ldots}, w_n),\\
\mathit{PB}(\textbf{v}) &= \left(\frac{-2 (\textbf{v}-\textbf{x}_0)\Wm\textbf{x}_0^T}{(\textbf{v}-\textbf{x}_0)\Wm(\textbf{v}-\textbf{x}_0)^T}\right)\cdot(\textbf{v}-\textbf{x}_0)+\textbf{x}_0.
\end{align*}

\paragraph{GAT}Our GAT layer is defined similarly to our GCN layer but with the weighted $\bL = \mathit{Att}(\bH^{(k-1)})$ as follows:
\begin{equation}\nonumber
\overline{\bH}_u^{(k)}=P_\Wm\Big(\mathit{Att}(\bH^{(k-1)})\cdot \bH^{(k-1)}\Big),\ \ 
\bH_u^{(k)}=\mathit{PB}\bigg(\ \sigma\Big( \mathit{PF}(\overline{\bH}_u^{(k)})\cdot \W^{(k)}\Big)\ \bigg).
\end{equation}

\paragraph{Classification}

 Let $\hat{Y} \in R^{m \times C}$ be the predictions of $C$ classes for $m$ nodes, where $\hat{Y}_{ic}$ denotes the probability of node $u_i$ belongs to class $c$.
Then, the class prediction $\hat{Y}$ of a $\ell$-layer GNN is obtained by $ \hat{Y} =\mathit{softmax}(\bH^{(\ell)})$, where $ \mathit{softmax}({\bH^{(\ell)}_i}) =\frac{\bH^{(\ell)}_i}{\sum\nolimits_{c=1}^{C}\mathit{exp}(\bH^{(\ell)}_{ic})}$.

\section{Proofs}
\subsection{Proof of Theorem \ref{main_theo}}
\label{app:main_theo}


For any $\x\in\M$, define
$$f_{\x} (\bH) = \max_{1\leq i \leq \numNode}(d(\x, \bH_{i,:})).$$
Then $f_{\x}$ is a continuous function. Since $Agg^G$ is contracted, we have 
$$d(\x, \bH_{i,:}^{(k)})\leq \max_{1\leq j \leq \numNode}d(\x, \bH_{j,:}^{(k-1)}) = f_{\x}(\bH^{(k-1)}),$$
and thus
$$0\leq f_{\x} (\bH^{(k)}) = \max_{1\leq i \leq \numNode}(d(\x, \bH_{i,:}))\leq f_{\x}(\bH^{(k-1)}).$$
Therefore, the sequence of numbers $\{f_{\x}(\bH^{(k)})\}_{0\leq k}$ is a bounded and monotonically decreasing sequence.
Let $$F(\x) = \lim_{k\rightarrow\infty}f_{\x}(\bH^{(k)}).$$

First, we introduce some lemmas.
\begin{lemma}\label{lemma:equalNeigbour}
Assume that $F(\x)>0$ for a point $\x\in\M$, and assume
$0\leq k_1< k_2<\ldots<k_t<\ldots$
is an infinite sequence such that for any $1\leq i \leq \numNode$, $$\lim_{t\rightarrow\infty}\bH_{i,:}^{(k_t)} = \x_{i}$$ for some $\x_{i}\in\M$. Then for any  $1\leq i \leq \numNode$, there exists $\y_0\in\M$ such that $\x_{j} = \y_0, \forall u_j\in\N(u_i)$.
\end{lemma}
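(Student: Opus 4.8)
The plan is to deduce the neighbourhood statement for \emph{every} node from the ``equality forces the constant case'' half of Definition~\ref{def:contracted}, after first identifying what the common maximal distance to $\x$ must be and then spreading this information across the connected graph by iterating $\mathit{Agg}^G$. I would begin by pinning down $\lim_k \max_i d(\x,\bH^{(k)}_{i,:})$. The function $f_\x$ is continuous on $\M^m$, and the scalar sequence $f_\x(\bH^{(k)})$ was already shown to decrease monotonically to $F(\x)$. Restricting to the given subsequence $\{k_t\}$ and using $\bH^{(k_t)}_{i,:}\to\x_i$, continuity gives $f_\x(\bH^{(k_t)})\to f_\x(\X)=\max_i d(\x,\x_i)$ with $\X:=(\x_1,\dots,\x_m)$, while monotonicity gives $f_\x(\bH^{(k_t)})\to F(\x)$; hence $\max_i d(\x,\x_i)=F(\x)$. (If $F(\x)=0$ every $\x_i$ equals $\x$ and the lemma is immediate, so assume $F(\x)>0$.)

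Next I would iterate the aggregation. Set $\X^{(0)}:=\X$ and $\X^{(\ell+1)}:=\mathit{Agg}^G(\X^{(\ell)})$; since $\mathit{Agg}^G$ is continuous, $\bH^{(k_t+\ell)}=(\mathit{Agg}^G)^\ell(\bH^{(k_t)})\to\X^{(\ell)}$ for each fixed $\ell$, and $\{k_t+\ell\}_t$ is again a strictly increasing index sequence, so the previous argument gives $\max_i d(\x,\X^{(\ell)}_i)=F(\x)$ for all $\ell\ge 0$. Define the maximiser set $B_\ell:=\{\,i:d(\x,\X^{(\ell)}_i)=F(\x)\,\}$, which is non-empty. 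For $i\in B_{\ell+1}$, property~2 of Definition~\ref{def:contracted} applied to the embedding $\X^{(\ell)}$ with reference point $\x$ yields
\[
F(\x)=d\big(\x,\X^{(\ell+1)}_i\big)\ \le\ \max_{u_j\in\N(u_i)}d\big(\x,\X^{(\ell)}_j\big)\ \le\ \max_{1\le j\le m}d\big(\x,\X^{(\ell)}_j\big)=F(\x),
\]
so every inequality is an equality; the ``only if'' clause of property~2 then forces the constant case, i.e.\ all $\X^{(\ell)}_j$ with $u_j\in\N(u_i)$ are equal, and the middle equality shows their common value lies at distance $F(\x)$ from $\x$. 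Thus $i\in B_{\ell+1}$ implies $\N(u_i)\subseteq B_\ell$, and in particular $B_{\ell+1}\subseteq B_\ell$.

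Finally I would invoke connectedness. The chain $B_0\supseteq B_1\supseteq\cdots$ is a decreasing sequence of non-empty subsets of the finite vertex set, so it stabilises: $B_\ell=:B_\infty\neq\emptyset$ for all large $\ell$. By the step above, $i\in B_\infty$ implies $\N(u_i)\subseteq B_\infty$, so $B_\infty$ is a union of connected components of $G$; since $G$ is connected and $B_\infty\neq\emptyset$, $B_\infty=\{1,\dots,m\}$, and therefore $B_\ell=\{1,\dots,m\}$ for every $\ell$, in particular $B_1=\{1,\dots,m\}$. Applying the $\ell=0$ instance of the step above to every $i\in B_1$ shows that for each $i$ all $\x_j$ with $u_j\in\N(u_i)$ coincide, which is the assertion of the lemma.

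I expect the main obstacle to be precisely the gap between ``local'' and ``global'': the contracted property only constrains the node(s) realising the maximal distance to $\x$, and the work lies in propagating this to all nodes. The device that makes it go through is the family of iterated aggregates $\X^{(\ell)}$ together with the nested maximiser sets $B_\ell$ and the connectedness of $G$. A minor technical point to check along the way is the continuity of $\mathit{Agg}^G$, used in the iteration step, which holds for every aggregation considered in the paper.
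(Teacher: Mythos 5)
Your proof is correct, and it reaches the conclusion by a noticeably different logical organization than the paper, even though both arguments rest on the same two ingredients (iterating $\mathit{Agg}^G$ on the limit configuration while the value of $f_{\x}$ is pinned at $F(\x)$, and the equality clause of Condition~2 of Definition~\ref{def:contracted} combined with connectedness). The paper argues by contradiction: it assumes some node $u_0$ has a non-constant limiting neighbourhood, obtains a strict decrease $d(\x,\x_0^1)<F(\x)$ at that node, propagates the strict decrease along paths to every node after at most $m+1$ further aggregation steps, and contradicts $f_{\x}(\bH^{(k)})\geq F(\x)$. You instead argue directly: you first establish $\max_i d(\x,\X_i^{(\ell)})=F(\x)$ for every iterate $\ell$, then show the maximiser sets $B_\ell$ are nested and neighbourhood-closed in the limit, so $B_\ell$ must be the whole vertex set, and read off the conclusion from the equality case at $\ell=0$. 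Your version buys a cleaner treatment of the strict-versus-non-strict case distinction: the paper's chain of strict inequalities $d(\x,\x_{u_s}^{s+1})<\max_{u_j\in\N(u_s)}d(\x,\x_j^{s})$ is only licensed when the step-$s$ neighbourhood values are non-identical (otherwise one needs the separate observation that the aggregate then equals the common value, which is already at distance $<F(\x)$), whereas your argument only ever invokes the equality clause where equality has been forced. Both proofs rely on continuity of $\mathit{Agg}^G$ to pass the limit through the iteration, which you correctly flag and which the paper secures via its local Lipschitz remark.
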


\begin{proof}
For all $1\leq i \leq \numNode$, the map from $\{\x_{j}\}_{u_j\in\N(u_i)}$ to $\x_{i}'$ defined below is a differentiable map.
\begin{equation}\label{equa:agg_{i,:}0}
\x_{i}' = Agg^G(\x_{j}: \forall u_j\in\N(u_i))
\end{equation}
 Since $\mathcal{M}$ and $\M^N$ are closed, the map defined by Equation (\ref{equa:agg_{i,:}0}) is locally \emph{Lipschitz map} around any point $\bH\in \M^N$.

Then, by assumption of the Lemma, for any $1\leq i \leq \numNode$, there exists $\x_{i} =\x_{i}^0, \x_{i}^1, \x_{i}^2, \ldots, \x_{i}^s,\ldots$ such that
$$\lim_{t\rightarrow\infty}\bH_{i,:}^{(k_t+s)} = \x_{i}^s,\ \ \forall s>0$$

where
\begin{equation}\label{equa:agg_{i,:}1}
\x_{i}^s = Agg^G(\x_{j}^{s-1}: \forall u_j\in\N(u_i))
\end{equation}

Next, we prove the lemma by contradiction.

Assume that $u_0$ is a node such that $\{\x_{j}\}_{u_j\in\N(u_0)}$ are not identical. Then, since $Agg^G$ is contracted, for any $\x\in\M$, we have 
$$d(\x, \x_{0}^1)< \max_{u_j\in\N(u_0)}\{d(\x, \x_{j}^0)\}\leq F(\x).$$
For any $1\leq n \leq \numNode$, since $G$ is connected, we can find a path from $u_0$ to $u_n$ in $G$. Without loss of generality, we assume that the path is the following sequence of nodes:
$$u_0, u_1, \ldots, u_n \ \ (n\leq \numNode),$$
where $u_j\in\N(u_{j+1})$. 
Since $d(\x, \x_{0}^1)< F(\x)$, $u_0\in \N(u_{1})$ and $\max_{u_j\in\N(u_1)}\{d(\x, \x_{j}^0)\}\leq F(\x)$, we have
$$d(\x, \x_{u_1}^2)< \max_{u_j\in\N(u_1)}\{d(\x, \x_{j}^1)\}\leq F(\x).$$
Repeat this progress, we conclude that
$$d(\x, \x_{u_s}^{s+1})< \max_{u_j\in\N(u_{s})}\{d(\x, \x_{j}^{s})\}\leq F(\x),\ \ \forall s>0.$$

Therefore, we have $d(\x, \x_{i}^s)<F(\x)$ when $s>0$.
Since $\lim_{t\rightarrow\infty}\bH_{i,:}^{(k_t+\numNode+1)} = \x_{i}^{\numNode+1}$,there exists $T$ such that:
$$d(\x, \bH_{j,:}^{(k_t+\numNode+1)})< F(\x),\ \ \forall t>T, 1\leq i \leq \numNode.$$
Then we have 
$$F(\x)\leq f_\x(\bH^{(k_t+\numNode+1)})= \max_{1\leq j \leq \numNode}d(\x, \bH_{j,:}^{(k_t+\numNode+1)}) <F(\x),\ \ \forall t>T.$$
This is a contradiction!

The lemma is proved by contradiction.

\end{proof}

\begin{lemma}\label{lemma:graph}
Given a connected graph $G$ and an embedding $\bH$ of $G$ on $\M$. If for any node $1\leq i \leq \numNode$ in $G$, the  embedding points of neighbours of $u$ are identical (i.e.,  $\{\x_{j}\}_{u_j\in\N(u_i)}$ are identical), then there exists $\x \in \M$ s.t. $\bH_{i,:} =\x, \forall 1\leq i\leq m$. 

\end{lemma}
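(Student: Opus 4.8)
The statement to prove is Lemma~\ref{lemma:graph}: on a connected graph $G$ with embedding $\bH$, if for every node $u_i$ the embedding points of all neighbours in $\N(u_i)$ coincide, then in fact all nodes share a single embedding point.

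\textbf{Approach.} The plan is a straightforward connectivity argument by propagation along paths. First I would fix an arbitrary node, say $u_1$, and set $\x := \bH_{1,:}$. The hypothesis applied at $u_1$ says that all points $\{\bH_{j,:} : u_j \in \N(u_1)\}$ are identical; since $u_1 \in \N(u_1)$, that common value must be $\x$ itself. Hence every neighbour $u_j$ of $u_1$ has $\bH_{j,:} = \x$. Now I would iterate: given any node $u_j$ with $\bH_{j,:} = \x$, the hypothesis at $u_j$ forces all of $\{\bH_{k,:} : u_k \in \N(u_j)\}$ to be equal, and again $u_j \in \N(u_j)$ pins that common value to $\x$, so every neighbour of $u_j$ also has embedding $\x$.

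\textbf{Key steps in order.} (1) Observe that the hypothesis ``$\{\x_j\}_{u_j \in \N(u_i)}$ are identical'' together with $u_i \in \N(u_i)$ implies that the common value equals $\bH_{i,:}$; thus $\bH_{i,:} = \bH_{j,:}$ for every $u_j$ adjacent to $u_i$. (2) Let $S = \{ u_i : \bH_{i,:} = \x\}$ where $\x = \bH_{1,:}$; then $u_1 \in S$, and by step (1) $S$ is closed under taking neighbours, i.e.\ if $u_i \in S$ then $\N(u_i) \subseteq S$. (3) Since $G$ is connected, for any node $u_i$ there is a path $u_1 = v_0, v_1, \ldots, v_t = u_i$ with $v_{\ell} \in \N(v_{\ell+1})$; an easy induction on $\ell$ using step (2) shows $v_\ell \in S$ for all $\ell$, hence $u_i \in S$. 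Therefore $S = V$, so $\bH_{i,:} = \x$ for all $1 \le i \le m$, as required.

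\textbf{Main obstacle.} There is essentially no real obstacle here; this lemma is a routine ``connected graph plus local-constancy implies global-constancy'' argument. The only mild care needed is the bookkeeping with the closed neighbourhood $\N(u_i) = \mathcal N(u_i) \cup \{u_i\}$, which is precisely what lets me conclude that the forced common value among the neighbours is the value at $u_i$ itself (rather than some unrelated point). Once that is noted, the induction along a connecting path closes the proof immediately, and the result feeds directly into the proof of Theorem~\ref{main_theo} via Lemma~\ref{lemma:equalNeigbour}.
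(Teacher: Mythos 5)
Your proof is correct and follows essentially the same route as the paper's (very terse) argument: the key observation in both is that $u_i \in \N(u_i)$ pins the common neighbourhood value to $\bH_{i,:}$ itself, after which connectedness propagates a single value to all nodes. Your write-up simply spells out the path-induction that the paper leaves implicit.
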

\begin{proof}
If $\N = \N$, then any node $n$ is in its neighborhood, and thus we have $\bH$ consisting of one point since $G$ is connected.

\end{proof}

Now, we start proving Theorem \ref{main_theo}.

\begin{proof}(proof of Theorem \ref{main_theo})
First, we assume that $Agg^G$ is contracted.

Since $\M$ is closed, $\M^N$ also closed. Recall that $f_{\x} (\bH^{(k)}) = \max_{1\leq i \leq \numNode}(d(\x, \bH_{i,:}^{(k)}))$ is a monotonically decreasing sequence and thus $\bH^{(k)}\in \M^N$ is uniformly bounded. Then, there exists an infinite sequence 
$$0\leq k_1< k_2<\ldots<k_t<\ldots$$ 
such that for each $1\leq i \leq \numNode$, there exists $\x_{i}\in\M$ such that $\lim_{t\rightarrow\infty}\bH_{i,:}^{(k_t)} = \x_{i}$. 

Let $$F(\x) = \lim_{k\rightarrow\infty}f_{\x}(\bH^{(k)}).$$ If $F(\x)=0$ for some $\x\in\M$, then we have $\lim_{k\rightarrow\infty}\bH_{i,:}^{(k)} = \x_{i}$  for any $1\leq i \leq \numNode$. This proves the theorem.

Next, we assume $F(\x)>0$ for all $\x\in\M$. Then, by Lemmas \ref{lemma:equalNeigbour} and \ref{lemma:graph}, we know the set $\{\x_{i}\}_{1\leq i \leq \numNode}$ consist of one point $\x_0$.  Next, we prove $\lim_{k\rightarrow\infty}\bH_{i,:}^{(k)} = \x_0$ by contradiction.

Otherwise, there exists node, assume it is $u_0$, such that  $\lim_{k\rightarrow\infty}\bH_{0,:}^{(k)} \neq \x_0$. 
Then, there exists $\epsilon>0$, and we can find an infinite sequence 
$$0\leq k'_1< k'_2<\ldots<k'_t<\ldots$$ 
such that $$d(\bH_{0,:}^{(k'_t)}, \x_0)>\epsilon,  \ \ \forall t\geq 1.$$ 

Moreover, there exists a sub-sequence $\{k_s''\}_{0\leq s}$ of $\{k'_t\}_{0\leq t}$ such that $\bH_{j,:}^{(k_s'')}$ converged for any $1\leq j \leq \numNode$. For simplicity, we assume $\{k_s''\}_{0\leq s} = \{k'_t\}_{0\leq t}$. Then, for any $1\leq i \leq \numNode$, we have $\lim_{t\rightarrow\infty}\bH_{i,:}^{(k'_t)} = \x_{i}'$ for some $\x_{i}'\in\M$.
By Lemmas \ref{lemma:equalNeigbour} and \ref{lemma:graph}, we know the set $\{\x_{i}'\}_{1\leq i \leq \numNode}$ consist of one point $\x'_0$. 

Since 
$$F(\x) = \lim_{k\rightarrow\infty}f_{\x}(\bH^k) = \lim_{t\rightarrow\infty}f_{\x}(\bH^{k_t}) = \lim_{t\rightarrow\infty}f_{\x}(\bH^{k'_t}),$$
we have $d(\x, \x'_0)=d(\x, \x_0), \ \ \forall \x\in\M.$
Then we must have $\x_0 = \x_0'$. This is contradict to the fact that $\lim_{k\rightarrow\infty}\bH_{0,:}^{(k)} \neq \x_0$.

Finally, we have $\lim_{k\rightarrow\infty}\bH_{i,:}^{(k)} = \x_0$ when $\mathit{Agg}^G$ is contracted.

Now, if  $\mathit{Agg}^G$ is equivalently contracted with a  diffeomorphism $g$. That is, $g^{-1}\circ Agg^G\circ g$. Then, by the above result, there exists $\x_0\in\M$ s.t. for $\X_0 = (\x_0, \ldots, \x_0)\in\M^\numNode$, we have
\begin{align*}
 \X_0 &= \lim_{k\rightarrow\infty}\overbrace{(g^{-1}\circ Agg^G\circ g)\circ \ldots\circ (g^{-1}\circ Agg^G\circ g)}^{k\text{ times}}(g^{-1}(\bH^{(0)}))\\
    &= \lim_{k\rightarrow\infty}g^{-1}\circ \overbrace{Agg^G\circ \ldots\circ Agg^G}^{k\text{ times}}(\bH^{(0)}).
\end{align*}
Therefore,  we have 
$\lim_{k\rightarrow\infty}\bH^{(k)} = \lim_{k\rightarrow\infty}\overbrace{Agg^G\circ \ldots\circ Agg^G}^{k\text{ times}}(\bH^{(0)}) = g(\X_0).$
\end{proof}








\subsection{Proof of Theorem \ref{theo:agg}}
\label{app:theo_agg}
\begin{proof}
For any graph $G\in\mathcal{G}$ with nodes $\{u_1, \ldots, u_m\}$, if the restricted $\mathit{Agg}^G$ of $G$ on $\mathcal{M}$ do not satisfies condition 1 of Definition \ref{def:contracted}, then there is nothing to prove. Next, we assume $\mathit{Agg}^G$  satisfies condition 1 of Definition \ref{def:contracted}. If we can show that $\mathit{Agg}^G$ does not satisfies condition 2 Definition \ref{def:contracted}. Then $\mathit{Agg}^G$ is not contracted. Then, $\mathit{Agg}^G$ is not equivalently contracted since we can apply the same arguments above on $g^{-1}\circ \mathit{Agg}^G\circ g$ for any diffeomorphism $g:\M^m\rightarrow \M^m$. Therefore, $Agg^G$ is not equivalently contracted and thus non-contracted.

Next,  we show that if  $\mathit{Agg}^G$  satisfies condition 1 of Definition \ref{def:contracted}, then  $\mathit{Agg}^G$ does not satisfies condition 2  of Definition \ref{def:contracted}.

Let $\x_0\in\M$ be an arbitrary point, and let $Agg$ be any aggregation function over $\M$. Next, we show that there exists a embedding $\bH$ of $G$ over $\M$ such that 
  $$d\left(\x_0,\ \bH'_{0,:}\right)\geq \max_{u_j\in\N(u_0)}\{d\left(\x_0, \bH_{j,:}\right)\},$$
where  $ \bH' = Agg^G(\bH)$ and $\{\bH_{j,:}\}_{u_j\in\N(u_i)}$  are not identical. If this is true, then $Agg^G$ does not satisfies condition 2  of Definition \ref{def:contracted}.

Now, we start finding such an embedding $\bH$. Since $\M$ is compact, there exists $\y_0\in\M$ such that

$$d(\x_0, \y_0) = \max_{\y\in\M} \{d(\x_0,\y)\}.$$

 Consider the map
 $f: \M^N\rightarrow \mathcal{M}$ 
defined by  
$$f(\bH_{j,:},\forall u_j\in\N(u_i)) = \bigg(Agg^G\Big( \bH\Big)\bigg)_{0,:}.$$ 
Then, $f$ is a subjective map since $Agg^G$.
By rank theorem \cite{Lee_2013}, the subset $f^{-1}(\y_0)\subseteq \M^N$ is a sub-manifold of dimension $d\cdot (N-1)$.

Moreover, the dimension of the sub-manifold of $\M^N$
$$\Q = \{\bH\in \M^N \mid \bH_{i,:} =\bH_{j,:}, \forall u_i, u_j \in\N(u_0)  \}$$
is $d\cdot(N- |\N(u_0)|+1) < d\cdot (N-1)$ as we assume $|\N(u_0)|>2$ and $d>0$. Therefore, there exists an embedding $\bH^*\in f^{-1}(\y_0) \setminus \Q$ such that $f(\bH^*) = \y_0$. 
Next, we show this $\bH^*$ is the desired embedding.

First, since $\bH^*\not\in\Q$, we know $\bH^*_{j,:}, u_j\in\N(u_0)$ are not identical.  Furthermore, let $\bH^{*'} = Agg^G(\bH^*)$,  we have 
$$\bH^{*'}_{0,:} = (Agg^G(\bH^*))_{0,:}  = f(\bH^*) = \y_0.$$
Since $d(\x_0, \y_0) = \max_{\y\in\M} \{d(\x_0,\y)\}$,  we have
  $$d\left(\x_0,\ \bH^{*'}_{0,:} \right) = d\left(\x_0,\ \y_0 \right)\geq \max_{u_j\in\N(u_0)}\{d\left(\x_0, \bH^*_{j,:}\right)\}.$$

Therefore, $\bH^*$ is the desired embedding. The theorem is proved.
\end{proof}

\subsection{Proof of Proposition \ref{prop:aggLi}}
\begin{proof}

For Cases 1 and 3, we have $\sum_{k=1}^m\bL_{i,k}=1$ and $\bL_{i, j}>0$ for all $1\leq i, j\leq m$. Then, $\mathit{Agg}^G$  satisfies Condition 1 of Definition \ref{def:contracted}. Moreover,  $\mathit{Agg}^G$ satisfies Condition 2 of Definition \ref{def:contracted} since for all $\x_1, \ldots, \x_m\in \R^n$ and positive weight $\{w_i\}_{i=1}^m$ s.t. $w_i>0, \sum_{i=1}^m w_i=1$, we have
$$d_{\R^n}(\x, \sum_{i=1}^m w_i\x_i) \leq \max(d_{\R^n}(\x, \x_1), \ldots, d_{\R^n}(\x, \x_m)),$$
where the equality holds if and only if $\x_1= \ldots =\x_m$. Therefore, $\mathit{Agg}^G$ is contracted.

Now, we prove the proposition holds for Case 2. Next, assume $\bL = (1-\lambda)\I+\lambda \tD^{ -\frac{1}{2}}\tA\tD^{-\frac{1}{2}}$ for some $\lambda\in(0, 1]$. 

Let $g: (\R^n)^N\rightarrow (\R^n)^N$ be the diffeomorphism map defined by $g(\textbf{H}) = \tD^{\frac{1}{2}}\textbf{H}$. Then, we have
$$(g^{-1}\circ \mathit{Agg}^G\circ g)(\textbf{H}) =\tD^{-\frac{1}{2}}\cdot\Big([(1-\lambda)\I+\lambda \tD^{ -\frac{1}{2}}\tA\tD^{-\frac{1}{2}}]\cdot (\tD^{\frac{1}{2}} \bH)\Big)= [(1-\lambda)\I+\lambda \tD^{-1}\tA]\cdot \bH.$$
Then, $g^{-1}\circ \mathit{Agg}^G\circ g$ is contracted by Case 1. Therefore, we have $\mathit{Agg}^G$ is equivalently contracted. 
 \end{proof}

\section{Experiment Setup}
\subsection{Dataset Description Statistics}
\label{sec:dataset}


In order to make a fair comparison with prior research, we conduct the node classification task using four homophilic datasets: Cora, Citeseer, Pubmed~\cite{yang2016revisiting}, and CoauthorCS~\cite{shchur2018pitfalls}, and five heterophilic datasets: Texas, Cornell, Wisconsin, Actor and Chameleon\cite{pei2020geom}. These datasets are widely used to investigate the problem of over-smoothing in GNNs. The statistics of the datasets are provided in Table~\ref{table:dataset}. 



In our experiments with Cora, Citeer, and Pubmed, we follow semi-supervised setup outlined in ~ \cite{kipf2016semi,zhou2020towards}, employing 20 nodes per class for training, 500 nodes for validation, and 1000 nodes for testing. For CoauthorCS, following~\cite{zhou2020towards}, we use 40 nodes per class for training, 150 nodes per class for validation, and allocating the remaining nodes for testing. For other datasets, we follow the~\cite{chen2020simple,pei2020geom}, randomly distribute nodes of each class into 60\%, 20\%, and 20\% for training, validation, and testing, respectively.

If a task needs many layers to perform well, using an over-smoothing alleviation approach would be beneficial. Therefore, we created a scenario that requires more layers to perform well. For each dataset, we create the scenario by removing node features in the validation and testing sets. We remove the input node embeddings of both validation and test sets, and replace them with zeros.

\begin{table}
\renewcommand*{\arraystretch}{1.1}
\footnotesize
	\setlength{\tabcolsep}{0.55mm}
	\center
    \caption{\small{Dataset statistics}} 
    \label{table:dataset}	
	\resizebox{.8\textwidth}{!}{
        \begin{tabular}{cccccc}
        \hline
Datasets  & \#Nodes  & \#Edges  & \#Features  & \#Classes  & Ave.Degree  \\ \hline
Cora  & 2708  & 5429  & 1433  & 7 &3.88  \\
Citeseer  & 3327  & 4732  & 3703  & 6  & 2.84\\
Pubmed  & 19717  & 44338  & 500  & 3 & 4.50 \\
CoauthorCS  & 18333  & 81894  & 6805  & 15  & 8.93\\ 
Texas & 183 & 325 & 1703 & 5 &3.38 \\ 
Wisconsin & 251 & 511 & 1703 & 5&5.45 \\ 
Cornell & 183 & 298 & 1703 & 5&3.22 \\ 
Actor & 7600 & 30019 & 932 & 5 &8.83\\ 
Chameleon  & 2277 & 36101 & 2325 & 5 & 13.79\\ \hline
                \end{tabular}
                }
\end{table}


\subsection{Hyperparameter Settings}
\label{sec:Hyperparameter}

For the proposed ACM, We use the following sets of hyperparapeters.
For Citeseer, Cora, and CoauthorCS: a learning rate of $5\times10^{-3}$, 1000 maximum epochs, 0.6 dropout rate, $5\times10^{-4}$ L2 regularization for Citeseer, $5\times10^{-5}$ L2 regularization for Cora and CoauthorCS. For Pubmed: a learning rate of $1\times10^{-2}$, 1000 maximum epochs, 0.6 dropout rate, $1\times10^{-3}$ L2 regularization. The hidden layer size is 16 units for four datasets. We also add this standard normalization operation BatchNorm after the transformation step on GAT and GCN.


\subsection{Baselines}
The following open source public available implementation of baselines which are used in this paper:
\begin{itemize}
    \item BatchNorm, PairNorm, DGN: \href{https://github.com/Kaixiong-Zhou/DGN/}{https://github.com/Kaixiong-Zhou/DGN/}
    \item DeCorr: \href{https://github.com/ChandlerBang/DeCorr}{https://github.com/ChandlerBang/DeCorr}
    \item DropEdge: \href{https://github.com/DropEdge/DropEdge}{https://github.com/DropEdge/DropEdge}
    \item DAGNN, GPRGNN, GCNII, APPNP:
    \href{https://github.com/VITA-Group/Deep_GCN_Benchmarking}{https://github.com/VITA-Group/Deep\_GCN\_Benchmarking}
\end{itemize}

\section{More Experiment Results}
\subsection{Figures}
\label{appendix:figures}
In Fig. \ref{fig:test_acc_SGC_MS0_all_appendix} illustrated the accuracy results of different over-smoothing alleviation methods on SGC, with increasing layers on all four datasets. 
Fig. \ref{fig:test_acc_SGC_MS1_all_appendix} shows the results under the scenario with missing features. 
\begin{figure*}
\vspace{-2ex}
\centering
\includegraphics[width=\linewidth]{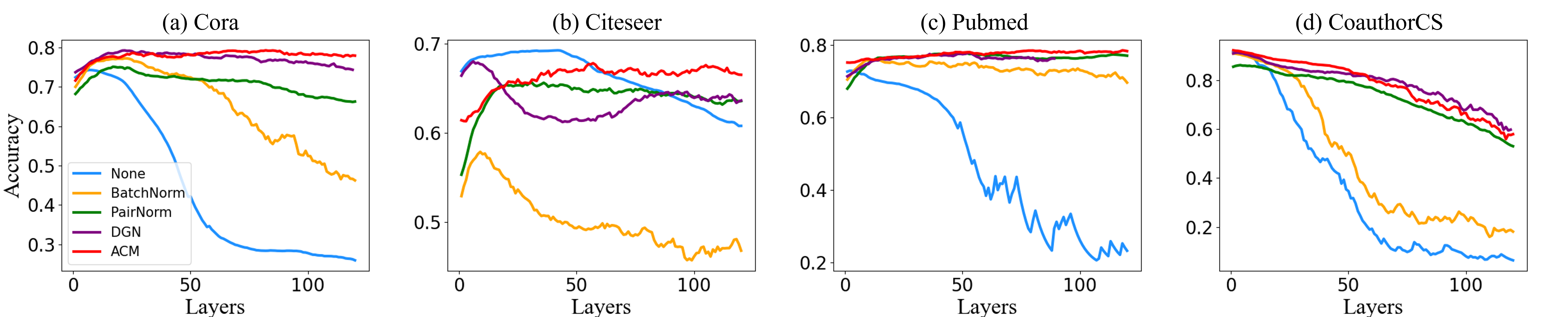}

\centering
\caption{Test accuracy of SGC with different methods and layers on all datasets, i.e., missrate 0.}
    \label{fig:test_acc_SGC_MS0_all_appendix}
    \vspace{-2ex}
\end{figure*}

\begin{figure*}
\vspace{-2ex}
\centering
\includegraphics[width=\linewidth]{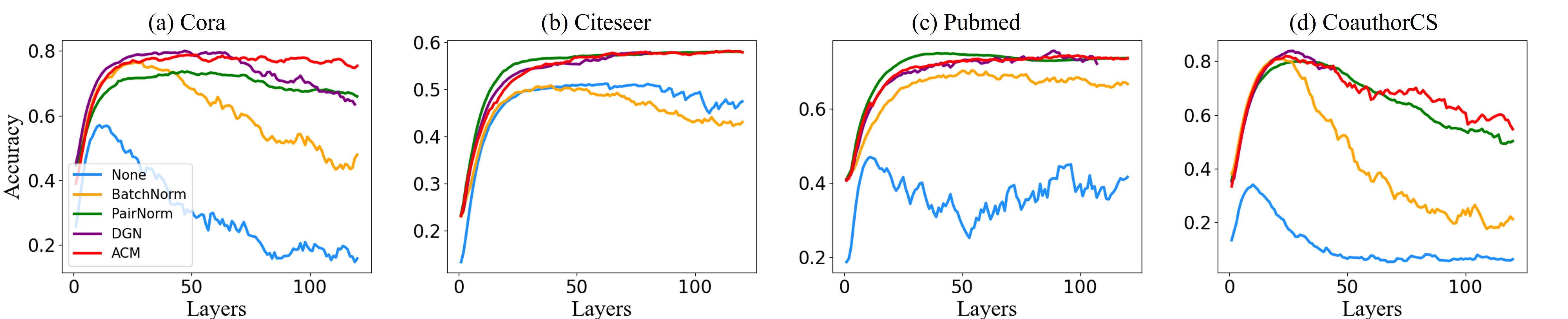}
\centering
\caption{Test accuracy of SGC with different methods and layers on all datasets with missing features, i.e., missrate 1.}
    \label{fig:test_acc_SGC_MS1_all_appendix}
    \vspace{-2ex}
\end{figure*}

\subsection{Tables}
\label{sec:Tablesappendix}
\begin{table}
\renewcommand*{\arraystretch}{1.1}
\footnotesize
\setlength{\tabcolsep}{1.5mm}
\centering
\caption{\small{Node classification accuracy (\%) on GCN, GAT.}}
\label{table:gatgcncomplete}
\resizebox{\textwidth}{!}{
\begin{tabular}{c|c|ccccc|ccccc|ccccc|ccccc}
\hline
\multicolumn{2}{c|}{Datasets} &     \multicolumn{5}{|c}{Cora} &      \multicolumn{5}{|c}{Citeseer} &      \multicolumn{5}{|c}{Pubmed} &   \multicolumn{5}{|c}{CoauthorCS}    \\ \hline
\textbf{Methods} & \textbf{\#Layers  } &  L2 & L15 & L30 & L45  &    L60 & L2 & L15 & L30 & L45  &    L60 &    L2 & L15 & L30 & L45  &    L60 &    L2 & L15 & L30 & L45  &    L60  \\ \hline
 \multirow{9}*{\textbf{GCN}}  &    None & 82.2  & 18.1  & 13.1  & 13.0  & 13.0  & 70.6  & 15.2  & 9.4  & 7.7  & 7.7  & 79.3  & 22.5  & 18.0  & 18.0  & 18.0  & 92.3  & 72.2  & 3.3  & 3.3  & 3.3   \\
 &   BN & 73.9  & 70.3  & 67.2  & 60.2  & 58.4  & 51.3  & 46.9  & 47.9  & 38.7  & 36.5  & 74.9  & 73.7  & 70.4  & 72.9  & 67.1  & 86.0  & 78.5  & 84.7  & 80.1  & 79.1   \\
 &   PN & 71.9  & 67.2  & 64.3  & 54.5  & 47.0  & 60.5  & 46.7  & 47.1  & 43.1  & 37.1  & 71.1  & 70.6  & 70.4  & 63.4  & 60.5  & 77.8  & 69.5  & 64.5  & 70.0  & 66.5   \\
 &   DropEdge    & \textbf{82.8} & 70.5  & 45.4  & 13.0  & 13.0  & 71.7  & 43.3  & 31.6  & 7.7  & 7.7  & 78.8  & 74.0  & 62.1  & 18.0  & 18.0  & 92.2  & 76.7  & 31.9  & 3.3  & 3.3   \\
 &   DGN & 82.0  & 75.2  & 73.2  & 67.8  & 52.6  & 69.5  & 53.1  & 52.6  & 45.8  & 40.5  & 79.5  & 76.1  & 76.9  & 73.4  & 72.8  & 92.3  & 83.7  & 84.4  & 83.7  & \textbf{82.1 }  \\
 &   DeCorr & 82.2  & \textbf{77.0} & 73.4  & 38.9  & 13.0  & \textbf{72.1} & \textbf{67.7} & \textbf{67.3} & 37.1  & 7.7  & \textbf{79.6} & \textbf{78.1} & \textbf{77.3} & 32.5  & 13.0  & \textbf{92.4} & \textbf{86.4} & 84.5  & 29.0  & 3.3   \\
 &   ACM & 77.2  & 74.5  & \textbf{73.5} & \textbf{71.6} & 69.5  & 67.3  & 56.4  & 55.1  & 56.5  & \textbf{53.5} & 78.9  & 75.7  & 75.7  & 74.3  & 74.4  & 91.6  & 83.9  & \textbf{85.5} & 84.6  & \textbf{82.4}  \\
 &   ACM* & 77.0  & 74.5  & 72.4  & \textbf{71.6} & \textbf{70.3} & 60.1  & 55.7  & 56.5  & \textbf{57.0} & 53.4  & 76.0  & 75.6  & 74.4  & \textbf{76.3} & \textbf{74.8} & 91.6  & 85.6  & 84.3  & \textbf{84.7} & 70.7   \\ \hline
 \multirow{9}*{\textbf{GAT}}  &    None & 80.9  & 16.8  & 13.0  & 13.0  & 13.0  & 70.2  & 22.6  & 7.7  & 7.7  & 7.7  & 77.8  & 37.5  & 18.0  & 18.0  & 18.0  & 91.5  & 6.0  & 3.3  & 3.3  & 3.3   \\
 &   BN & 77.8  & 33.1  & 25.0  & 21.6  & 16.2  & 61.5  & 28.0  & 21.4  & 21.1  & 18.1  & 76.2  & 56.2  & 46.6  & 45.3  & 29.4  & 89.4  & 77.7  & 16.7  & 4.2  & 2.6   \\
 &   PN & 74.4  & 49.6  & 30.2  & 28.8  & 19.3  & 62.0  & 41.4  & 33.3  & 30.6  & 27.3  & 72.4  & 68.8  & 58.2  & 58.8  & 58.1  & 85.9  & 53.1  & 48.1  & 30.4  & 26.6   \\
 &   DropEdge    & 81.5  & 66.3  & 51.0  & 13.0  & 13.0  & 69.8  & 52.6  & 36.1  & 7.7  & 7.7  & 77.4  & 72.3  & 64.7  & 18.0  & 18.0  & 91.2  & 75.0  & 52.1  & 3.3  & 3.3   \\
 &   DGN & 81.1  & 71.8  & 51.3  & 44.2  & 38.0  & 69.3  & 52.6  & 45.6  & 32.8  & 27.5  & 77.5  & 75.9  & 73.3  & 53.7  & 60.1  & \textbf{91.8} & 84.5  & 75.5  & 20.9  & 44.8   \\
 & DeCorr & \textbf{81.6} & \textbf{76.0} & 54.3  & 18.3  & 13.0  & \textbf{70.6} & \textbf{63.2} & 46.9  & 18.9  & 7.7  & \textbf{78.1} & \textbf{77.5} & 74.1  & 48.9  & 18.0  & 91.3  & 83.5  & 77.3  & 19.2  & 3.3   \\
 & ACM & 76.4  & 72.2  & 67.4  & 53.5  & 48.5  & 64.9  & 60.3  & 49.5  & 38.8  & 38.4  & 75.4  & 76.7  & 75.4  & 72.0  & 68.4  & 89.5  & \textbf{85.7} & 84.8  & 79.8  & 74.2   \\
 & ACM* & 78.3  & 74.0  & \textbf{71.4} & \textbf{64.4} & \textbf{61.3} & 64.7  & 59.2  & \textbf{56.2} & \textbf{48.3} & \textbf{47.2} & 77.4  & 74.7  & \textbf{76.7} & \textbf{75.0} & \textbf{70.8} & 90.9  & 85.4  & \textbf{85.0} & \textbf{82.6} & \textbf{76.3}  \\ \hline

\end{tabular}
}
\end{table}

\end{document}